\pdfoutput=1
\documentclass[11pt,letterpaper]{article}
\usepackage{arxiv}

\usepackage{amsmath,amssymb,amsfonts,amsthm,mathtools}
\usepackage{aliascnt}
\usepackage{bbm}
\usepackage{xcolor}
\usepackage{graphicx}
\usepackage{enumitem}
\usepackage{booktabs}
\usepackage{tabularx}
\usepackage{array}
\usepackage{algorithm}
\usepackage{algpseudocode}
\usepackage{float}
\usepackage{nicefrac}
\usepackage{xspace}
\usepackage[colorlinks=true,allcolors=blue,hypertexnames=false]{hyperref}
\hypersetup{
  pdftitle={Nested Convex-Body Chasing for Online Optimization with Evolving Feasible Sets},
  pdfauthor={Dhruv Sarkar and Aprameyo Chakrabartty}
}
\usepackage[capitalize,noabbrev,nameinlink]{cleveref}
\usepackage[numbers]{natbib}

\newtheoremstyle{ncbevfplain}
  {6pt}
  {6pt}
  {\itshape}
  {}
  {\bfseries}
  {.}
  {0.5em}
  {}
\newtheoremstyle{ncbevfdefinition}
  {6pt}
  {6pt}
  {\normalfont}
  {}
  {\bfseries}
  {.}
  {0.5em}
  {}

\theoremstyle{ncbevfplain}
\newtheorem{ncbevfthm}{Theorem}[section]

\newaliascnt{ncbevflem}{ncbevfthm}
\newtheorem{ncbevflem}[ncbevflem]{Lemma}
\aliascntresetthe{ncbevflem}

\newaliascnt{ncbevfprop}{ncbevfthm}

\aliascntresetthe{ncbevfprop}

\newaliascnt{ncbevfcor}{ncbevfthm}
\newtheorem{ncbevfcor}[ncbevfcor]{Corollary}
\aliascntresetthe{ncbevfcor}

\theoremstyle{ncbevfdefinition}
\newaliascnt{ncbevfdef}{ncbevfthm}
\newtheorem{ncbevfdef}[ncbevfdef]{Definition}
\aliascntresetthe{ncbevfdef}

\newaliascnt{ncbevfassump}{ncbevfthm}
\newtheorem{ncbevfassump}[ncbevfassump]{Assumption}
\aliascntresetthe{ncbevfassump}

\newaliascnt{ncbevfrem}{ncbevfthm}
\newtheorem{ncbevfrem}[ncbevfrem]{Remark}
\aliascntresetthe{ncbevfrem}

\crefname{ncbevfthm}{theorem}{theorems}
\Crefname{ncbevfthm}{Theorem}{Theorems}
\crefname{ncbevflem}{lemma}{lemmas}
\Crefname{ncbevflem}{Lemma}{Lemmas}
\crefname{ncbevfprop}{proposition}{propositions}
\Crefname{ncbevfprop}{Proposition}{Propositions}
\crefname{ncbevfcor}{corollary}{corollaries}
\Crefname{ncbevfcor}{Corollary}{Corollaries}
\crefname{ncbevfdef}{definition}{definitions}
\Crefname{ncbevfdef}{Definition}{Definitions}
\crefname{ncbevfassump}{assumption}{assumptions}
\Crefname{ncbevfassump}{Assumption}{Assumptions}
\crefname{ncbevfrem}{remark}{remarks}
\Crefname{ncbevfrem}{Remark}{Remarks}

\algrenewcommand\algorithmicrequire{\textbf{Input:}}
\algrenewcommand\algorithmicensure{\textbf{Output:}}

\DeclareMathOperator{\diam}{diam}
\DeclareMathOperator{\dist}{dist}

\DeclareMathOperator*{\argmin}{arg\,min}

\newcommand{\R}{\mathbb R}

\newcommand{\E}{\mathbb E}
\newcommand{\B}{B_2}

\newcommand{\COCO}{\textsf{COCO}\xspace}
\newcommand{\CONES}{\textsf{CONES}\xspace}
\newcommand{\CCV}{\mathsf{CCV}}
\newcommand{\Reg}{\mathsf{Reg}}
\newcommand{\Chase}{\textsc{Chase}}
\newcommand{\M}{\mathsf{M}}
\newcommand{\norm}[1]{\left\lVert #1 \right\rVert}
\newcommand{\ip}[2]{\left\langle #1,#2 \right\rangle}

\newcommand{\set}[1]{\left\{#1\right\}}
\newcommand{\abs}[1]{\left\lvert #1\right\rvert}

\allowdisplaybreaks
\title{Nested Convex-Body Chasing for Online Optimization with Evolving Feasible Sets}

\author{Dhruv Sarkar\textsuperscript{1} \quad Aprameyo Chakrabartty\textsuperscript{2}\\[3pt]
\normalfont\small\textsuperscript{1} Massachusetts Institute of Technology, Cambridge, MA\\[-1pt]
\normalfont\small\textsuperscript{2} Purdue University, West Lafayette, IN\\[2pt]
\normalfont\small\texttt{dhruv.sarkar@mit.edu}\quad\texttt{chakr190@purdue.edu}}
\date{August 28, 2026}

\begin{document}
\maketitle

\begin{abstract}
We study online optimization under nested, shrinking feasible regions in two models: convex optimization with nested evolving feasible sets (\CONES) and adversarial constrained online convex optimization (\COCO).  Our algorithms separate loss control from geometric movement control.  Constrained minimizers and cumulative-loss tests preserve the relevant regret budget, while a deterministic resettable nested convex-body chaser controls movement between resets.

For \CONES with a \(G\)-Lipschitz, \(\mu\)-strongly convex objective on a diameter-\(D\) domain, we run the chaser on intersections of the current feasible set with adaptively chosen objective sublevel sets.  Using the known Euclidean chasing ratio \(O(\sqrt{d\log(1+d)})\), the resulting algorithm has nonpositive regret at every prefix and movement
\[
O\!\left(\sqrt{d\log(1+d)}\sqrt{\frac{GD\log(eT)}{\mu}}\right).
\]
More finely, the movement adapts to the total increase in the constrained optimum value.  Already in dimension two, with all geometric and objective parameters fixed independently of \(T\), every randomized algorithm with terminal expected regret \(O(T^\beta)\), for any fixed \(\beta<1\), incurs \(\Omega(\sqrt{\log T})\) expected movement on some deterministic nested sequence.  Thus the horizon dependence is optimal.  If the objective grows at least linearly with distance from each current constrained minimizer set, Steiner-point tracking gives movement independent of \(T\).

For general convex \COCO, one-step-delayed chasing with regularized-leader resets gives
\[
\Reg_T=O\!\left(G_fD\sqrt{d\log(1+d)}\,\sqrt T\right),
\qquad
\CCV_T=O\!\left(G_gD\sqrt{d\log(1+d)}\,\sqrt T\right).
\]
For strongly convex losses, both quantities are \(O(d\log(1+d)\log(eT))\) when the remaining problem parameters are fixed.  Across these results, the nested-chasing reduction replaces the \(O(d^{d/2})\) projection-path factor in the prior \CONES and \COCO analyses by the polynomial dependence inherited from Euclidean nested convex-body chasing.
\end{abstract}

\newpage
\setcounter{tocdepth}{1}
\tableofcontents
\newpage

\section{Introduction}
Online optimization is often studied in settings where the feasible region is fixed. Classical online convex optimization is the canonical example: the learner repeatedly chooses a point from a fixed convex set, observes a loss, and tries to compete with the best fixed decision in hindsight. In many applications, however, the feasible region itself evolves over time. A safety requirement may rule out a previously available decision, a resource constraint may become tighter, or a planning problem may receive new information that restricts the set of admissible actions. When the feasible regions are nested, the learner faces a geometric problem in addition to the usual optimization problem: it must move through a shrinking sequence of convex sets without paying too much movement.

We study two models in which this nested geometry is central. The first model is convex optimization with nested evolving feasible sets, abbreviated \CONES. In this model, the objective function is fixed and known from the beginning, while the feasible set is revealed gradually as a nested sequence. At each round, the learner observes the current feasible set and then chooses a feasible point. The goal is to obtain small regret relative to the final constrained optimum while keeping the total movement small. The second model is constrained online convex optimization, abbreviated \COCO. In \COCO, the learner chooses an action before seeing the current loss and constraint. After the action is played, the loss and constraint are revealed, and the cumulative feasible set is updated. Thus the learner cannot generally satisfy the newly revealed constraint on the same round. We treat both the general convex-loss regime, where the optimal regret scale is \(\sqrt T\), and the strongly convex-loss regime, where logarithmic regret is possible. In both cases the goal is to control regret against a fixed action feasible for all constraints and cumulative constraint violation.

Although these two models differ in their information pattern, they share the same geometric backbone. The feasible sets form a nested chain. In \CONES this chain is observed before the learner acts. In \COCO it is learned with one-round delay. The main question in both settings is how to construct a path through the shrinking feasible regions that has small movement while still respecting the loss objective. Our central idea is to separate these two tasks. Loss control is handled through constrained minimizers, cumulative-loss budgets, and reset rules. Movement control is delegated to nested convex-body chasing.

Nested convex-body chasing is an online geometric problem. An algorithm receives a nested sequence of compact convex bodies and, after seeing each body, must choose a point in it. The objective is to minimize total movement. Competitive algorithms for this problem give movement guarantees relative to the best offline path. We use such algorithms as black-box movement-control primitives. This is useful because the best known Euclidean nested-body chasers have polynomial dependence on the dimension, whereas more direct geometric arguments based on greedy projections or path-length estimates can have significantly worse dimension dependence.

\paragraph{Main ideas.}
A natural first attempt is to run a nested-body chaser on the feasible sets and play its outputs.  This alone is insufficient: the chaser controls movement but has no access to the objective.  The main algorithmic challenge is therefore to determine when a proposed point preserves the relevant cumulative-loss budget and when the algorithm should reset to an optimizer.

For strongly convex \CONES, let \(v_t\) denote the minimum of the fixed objective over the current feasible set.  A phase begins at a constrained minimizer at round \(r\).  Writing \(d_t=v_t-v_r\), the algorithm starts a sublevel epoch when \(d_t\) first becomes positive, sets its scale to \(R=d_t\), and feeds the chaser the nested requests
\[
S_t\cap\set{x:f(x)\le v_r+2R}.
\]
Whenever \(d_t>2R\), a new epoch begins at scale \(d_t\); hence the scales more than double.  Strong convexity converts each scale \(R\) into an \(O(\sqrt{R/\mu})\) distance bound, and singleton augmentation controls both movement within the epoch and the terminal jump.  Summing the square roots of the geometrically increasing scales gives a phase cost proportional to the square root of the increase in the constrained optimum.  A cumulative-loss test permits a reset only at a round \(s>2r\), so there are only \(O(\log T)\) phases.  This yields the \(O(\sqrt{\log T})\) upper bound, while a matching lower bound establishes the optimal horizon dependence.

Under uniform linear growth away from the current constrained minimizer set, a direct Steiner-point strategy is available.  The learner plays the Steiner point of that set, attaining the current constrained optimum exactly.  One-sided control of the set-valued minimizers then gives a movement bound independent of \(T\).

For general convex \COCO, the losses provide no uniform curvature, so we add a fixed quadratic regularizer to the cumulative leader.  The algorithm maintains a post-update feasible point and plays it with one-round delay.  A nested-body chaser proposes feasible points, and a regularized cumulative-loss budget decides whether to retain the proposal or reset to the regularized constrained leader.  Balancing the regularizer against the chasing movement yields \(O(\sqrt T)\) regret and cumulative constraint violation without an additional \(\log T\) factor.

For strongly convex \COCO, the same one-step-delayed feasible sequence makes both regret and current constraint violation reducible to its movement.  Chaser proposals are retained only while the post-update cumulative loss remains below the current constrained cumulative-loss minimum; otherwise the algorithm resets to that minimizer.  Retaining the resulting nonnegative slack strengthens the reset inequality, and strong convexity yields logarithmic movement, regret, and cumulative constraint violation.

\subsection{Main results}
We summarize the main guarantees.  Throughout, \(D\) is an upper bound on the diameter of the decision set, and \(\rho_d\) is the competitive-ratio guarantee of the fixed Euclidean nested-body chaser used by the algorithms.  For \CONES, let
\[
v_t:=\min_{x\in S_t}f(x),
\qquad
\Delta_v:=v_T-v_1,
\qquad
\M_T:=\sum_{t=2}^T\norm{x_t-x_{t-1}}.
\]
If an initial point \(x_0\in X\) is prescribed before \(S_1\) is revealed, at most an additional \(D\) must be added to the movement.

\paragraph{Strongly convex \CONES.}
For a \(G\)-Lipschitz, \(\mu\)-strongly convex objective, budgeted adaptive sublevel-set chasing has nonpositive regret at every prefix and, for a universal constant \(C>0\),
\[
\M_T
\le
C\rho_d\sqrt{\frac{\Delta_v(1+\log T)}{\mu}}
\le
C\rho_d\sqrt{\frac{GD\log(eT)}{\mu}}.
\]
With the known Euclidean ratio \(\rho_d=O(\sqrt{d\log(1+d)})\), this becomes
\[
\M_T
=
O\!\left(\sqrt{d\log(1+d)}\sqrt{\frac{GD\log(eT)}{\mu}}\right).
\]
Already in dimension two, with the diameter, Lipschitz constant, and strong-convexity parameter fixed, every randomized algorithm with terminal expected regret at most \(R_T\) has expected movement
\[
\Omega\!\left(\sqrt{\log\frac{T}{R_T+1}}\right)
\]
on some deterministic nested sequence.  Hence the \(\sqrt{\log T}\) dependence is optimal for every guarantee \(R_T=O(T^\beta)\) with fixed \(\beta<1\).

\paragraph{Uniform linear growth in \CONES.}
If
\[
f(x)-v_t\ge \alpha\,\dist(x,\mathcal A_t),
\qquad
\mathcal A_t:=\argmin_{y\in S_t}f(y),
\]
then Steiner-point tracking has nonpositive regret and
\[
\M_T
\le
dD+\frac{2d}{\alpha}\Delta_v
\le
dD+\frac{2dGD}{\alpha},
\]
which is independent of \(T\).

\paragraph{General convex \COCO.}
For convex \(G_f\)-Lipschitz losses and \(G_g\)-Lipschitz scalar constraints, the regularized one-step-delayed algorithm gives
\[
\Reg_T=O\!\left(G_fD\rho_d\sqrt T\right),
\qquad
\CCV_T=O\!\left(G_gD\rho_d\sqrt T\right).
\]
Substituting the known Euclidean ratio yields
\[
\Reg_T=O\!\left(G_fD\sqrt{d\log(1+d)}\,\sqrt T\right),
\qquad
\CCV_T=O\!\left(G_gD\sqrt{d\log(1+d)}\,\sqrt T\right).
\]
The formal theorem also gives a second tuning that removes the dimension factor from the growing \(\sqrt T\) term in cumulative constraint violation at the cost of a larger regret factor.

\paragraph{Strongly convex \COCO.}
For \(G_f\)-Lipschitz, \(\mu\)-strongly convex losses and \(G_g\)-Lipschitz scalar constraints, the one-step-delayed chasing algorithm satisfies
\[
\Reg_T
=O\!\left(\left[G_fD\sqrt{d\log(1+d)}+\frac{G_f^2}{\mu}d\log(1+d)\right]\log(eT)\right)
\]
and
\[
\CCV_T
=O\!\left(\left[G_gD\sqrt{d\log(1+d)}+\frac{G_fG_g}{\mu}d\log(1+d)\right]\log(eT)\right).
\]
Thus both quantities are logarithmic in \(T\) and have explicit polynomial dependence on \(d\).

\paragraph{Organization.}
\Cref{sec:related} discusses related work. \Cref{sec:preliminaries} gives notation, the two models, and the nested-body chasing primitive. \Cref{sec:toolkit} gives the endpoint-control lemma used throughout the paper. \Cref{sec:cones} applies the framework to \CONES. \Cref{sec:coco} applies the chasing framework to \(\COCO\), first for
strongly convex losses and then for general convex losses. \Cref{sec:discussion} summarizes the implications and open directions. Detailed proofs are deferred to the appendices.

\section{Related work}\label{sec:related}
\paragraph{Online convex optimization and long-term constraints.}
Online convex optimization provides the standard model for sequential convex decision-making with adversarial losses; see \citet{hazan2022introduction} for a textbook treatment. A large body of work extends this model to settings with constraints that may be violated in the short term but should be controlled over time. Early work on online convex optimization with long-term constraints includes primal-dual and penalty-based methods such as \citet{mahdavi2012trading}. In adversarial \COCO, the learner chooses a point before observing both the loss and the constraint, and performance is measured by regret together with cumulative constraint violation. \citet{sinha2024optimal} obtained near-optimal guarantees through a Lyapunov-weighted surrogate loss. Subsequent work studied instance-dependent and projection-based guarantees; in particular, \citet{vaze2025instance} emphasized the role of the nested feasible sets generated by revealed constraints. The present work develops a chasing-based route: a competitive nested-body chaser controls feasible-set movement, while constrained minimization and budget invariants control loss.

\paragraph{Nested projections, self-contracted curves, and geometric movement bounds.}
Projection-based methods exploit the nested cumulative feasible sets by repeatedly projecting onto the newest set.  Their trajectories are related to self-contracted curves, whose finite-length theory goes back to \citet{manselli1991maximum}.  This viewpoint underlies the \COCO guarantees of \citet{sarkar2026geometric}, which establish \(O(\sqrt T)\) regret and cumulative constraint violation for convex losses and logarithmic guarantees for strongly convex losses.  Our analysis recovers these horizon orders through nested-body chasing and replaces projection-path finite-length constants, which can grow superpolynomially with dimension, by the polynomial factor \(\rho_d\).

\paragraph{Nested convex-body chasing.}
Convex-body chasing was introduced by \citet{friedman1993chasing}.  For nested requests, \citet{bansal2020nested} proved chaseability, \citet{argue2019chasing} obtained a nearly linear dependence on dimension in general norms, and \citet{bubeck2020nested} gave the nearly optimal Euclidean ratio
\[
        \rho_d=O\!\left(\sqrt{d\log(1+d)}\right).
\]
We use the latter algorithm as a deterministic black-box subroutine.  Singleton augmentation---appending a singleton request at a desired reset point only for the analysis---allows one competitive guarantee to control both within-phase movement and the final reset jump.  Because our requests may be lower-dimensional, we also give a lower-dimensional-request extension of the full-dimensional chasing theorem.

\paragraph{Optimization with nested evolving feasible sets.}
The \CONES model was introduced by \citet{krishna2026conesv1,krishna2026conesv2}.  Because arXiv:2605.07386 changed materially between its first and second versions, we distinguish their formal results.  Version~1 proposed \(O(\log T)\) upper and lower bounds for smooth strongly convex objectives and uniformly sharp objectives.  Its strongly convex lower-bound parameters do not permit a growing number of phases when the remaining problem parameters are fixed, and its proposed sharp instance does not satisfy the stated uniform sharpness condition.  Version~2 repairs the strongly convex construction and proves the formal lower bound \(\Omega(\sqrt{\log T/\log\log T})\) under a regret condition imposed at every prefix.  The algebraic and geometric details are given in \Cref{app:cones-version-comparison}.

Version~2 was posted on August 16, 2026.  The adaptive sublevel-set upper bound and lower-bound construction in this paper were developed independently and concurrently with that revision.  Relative to Version~2, our upper bound improves the horizon dependence from \(O(\log T)\) to \(O(\sqrt{\log T})\), replaces the \(O(d^{d/2})\) projection-path factor by \(O(\sqrt{d\log(1+d)})\), and removes smoothness.  Our lower bound improves the rate to \(\Omega(\sqrt{\log T})\), applies to randomized algorithms, and requires only a terminal expected-regret guarantee.  The adversarial sequence is deterministic once the algorithm is fixed and does not depend on its realized random seed.

\begin{center}
\small
\renewcommand{\arraystretch}{1.15}
\begin{tabularx}{\linewidth}{@{}>{\raggedright\arraybackslash}p{0.25\linewidth}>{\raggedright\arraybackslash}X>{\raggedright\arraybackslash}X@{}}
\toprule
 & \textbf{Version 2} & \textbf{This paper} \\
\midrule
Strongly convex upper bound & \(O(\log T)\) horizon dependence & \(O(\sqrt{\log T})\) horizon dependence \\
Dimension factor & \(O(d^{d/2})\) & \(O(\sqrt{d\log(1+d)})\) \\
Smoothness & Required & Not required \\
Strongly convex lower bound & \(\Omega(\sqrt{\log T/\log\log T})\) & \(\Omega(\sqrt{\log T})\) \\
Regret condition in lower bound & Every prefix & Terminal expectation; randomized algorithms \\
\bottomrule
\end{tabularx}
\end{center}

For objectives with uniform linear growth away from each current minimizer set, our Steiner-point algorithm gives movement at most
\[
dD+\frac{2d}{\alpha}(v_T-v_1)
\le
dD+\frac{2dGD}{\alpha},
\]
which is independent of \(T\).  Hence a lower bound growing with the horizon cannot hold under this sharpness condition.

\paragraph{Relation to convex-loss \COCO guarantees.}
First-order methods attain \(O(\sqrt T)\) regret and \(O(\sqrt T)\) cumulative constraint violation in the convex-loss regime.  In particular, the method of \citet{sarkar2026geometric} attains these rates without an additional \(\log T\) factor.  Our result reaches the same horizon order through a different geometric mechanism and makes the dimension dependence inherited from nested-body chasing explicit.

The method uses exact regularized constrained minimization and tentative evaluation of a deterministic resettable nested-body chaser.  A quadratic regularizer controls displacement between reset points, while singleton-augmented chasing controls both movement inside a phase and the terminal reset jump.  This yields a self-contained chasing-based analysis that avoids projection-specific kinetic inequalities and finite-length constants.

\paragraph{Relation between \CONES and \COCO.}
Although \CONES and \COCO both generate nested feasible sets, their information patterns differ substantially. In \CONES the current feasible set is known before the action and the objective is fixed. In \COCO the learner acts before seeing the current constraint, and the losses change over time. The delayed feasible-sequence algorithm in this paper can be viewed as a \COCO analogue of the reset-based \CONES idea: it uses chasing to control movement inside phases and resets to a constrained cumulative-loss minimizer when loss control would otherwise fail. The analysis is correspondingly more delicate because it must account for possible negative regret accumulated before a reset. The nonnegative slack variable in our proof records this effect and makes the reset inequality stronger.

\section{Preliminaries}\label{sec:preliminaries}
Throughout, \(\norm{\cdot}\) denotes the Euclidean norm.  For a compact set \(K\subseteq\R^d\), let
\[
        \diam(K):=\sup_{x,y\in K}\norm{x-y}.
\]
For a point \(x\) and set \(K\), write \(\dist(x,K)=\inf_{y\in K}\norm{x-y}\).  The Euclidean unit ball is denoted by \(\B\).  Throughout, \(D\) denotes a fixed upper bound on \(\diam(X)\).

\subsection{The \CONES model}
In \CONES, a compact convex set \(X\subseteq\R^d\) and a fixed continuous convex objective \(f:X\to\R\) are known.  At each round \(t\), the learner observes a nonempty compact convex set \(S_t\), where
\[
        X=S_0\supseteq S_1\supseteq S_2\supseteq\cdots\supseteq S_T,
\]
and then chooses \(x_t\in S_t\).  Define
\[
        v_t:=\min_{x\in S_t} f(x),\qquad \mathcal A_t:=\argmin_{x\in S_t} f(x).
\]
The regret benchmark is the final constrained optimum:
\[
        \Reg_T^{\CONES}:=\sum_{t=1}^T f(x_t)-T v_T.
\]
The internal movement is
\[
        \M_T:=\sum_{t=2}^T \norm{x_t-x_{t-1}}.
\]
If a point \(x_0\in X\) is prescribed before \(S_1\) is revealed, we write
\[
        \M_T(x_0):=\norm{x_1-x_0}+\M_T.
\]
Since \(x_0,x_1\in X\), one always has \(\M_T(x_0)\le D+\M_T\).

\subsection{The \COCO model}
In \COCO, the learner chooses \(x_t\in X\) before observing the round-t loss and constraint.  After \(x_t\) is played, the adversary reveals a convex loss \(f_t:X\to\R\) and one scalar convex constraint \(g_t:X\to\R\).  The adversary may be adaptive; all guarantees below are deterministic and hold pathwise for every realized sequence satisfying common feasibility.  Define
\[
        S_0:=X,
        \qquad
        S_t:=S_{t-1}\cap\set{x\in X: g_t(x)\le 0}.
\]
We assume common feasibility, i.e. \(S_T\neq\emptyset\).  The regret and CCV are
\[
        \Reg_T:=\sum_{t=1}^T f_t(x_t)-\min_{x\in S_T}\sum_{t=1}^T f_t(x),
        \qquad
        \CCV_T:=\sum_{t=1}^T [g_t(x_t)]_+.
\]
We use the following regularity assumptions for the general convex-loss result.

\begin{ncbevfassump}[Convex \COCO]\label{ass:coco-convex}
The set \(X\subseteq\R^d\) is compact, convex, and has diameter at most \(D\).  Each loss \(f_t\) is convex and \(G_f\)-Lipschitz on \(X\).  Each constraint \(g_t\) is convex and \(G_g\)-Lipschitz on \(X\).  The cumulative feasible sets \(S_t\) are nonempty.
\end{ncbevfassump}

For the strongly convex result, we impose the stronger condition below.

\begin{ncbevfassump}[Strongly convex \COCO]\label{ass:coco}
Assumption~\ref{ass:coco-convex} holds, and each loss \(f_t\) is \(\mu\)-strongly convex on \(X\), with \(\mu>0\).
\end{ncbevfassump}

For strongly convex losses, let
\[
        F_t:=\sum_{\tau=1}^t f_\tau,
        \qquad
        u_t:=\argmin_{x\in S_t}F_t(x),
        \qquad
        v_t:=F_t(u_t).
\]
The point \(u_t\) is unique because \(F_t\) is \(\mu t\)-strongly convex.  We shall repeatedly use the inequality
\begin{equation}\label{eq:strong-min-gap}
        F_t(y)-F_t(u_t)\ge \frac{\mu t}{2}\norm{y-u_t}^2,
        \qquad y\in S_t.
\end{equation}
A proof is given in \Cref{app:strong-min-gap}.

\subsection{Nested convex-body chasing}
\label{subsec:nested-body-chasing}

We now describe the geometric primitive used throughout the paper.  The input to this
primitive is a nested sequence of convex sets.  At each step, the algorithm must choose a
point in the newest set, and its goal is to keep the total movement small.

Concretely, suppose a point starts at \(a\in\R^d\), and then a sequence of nonempty
compact convex sets is revealed:
\[
        K_1\supseteq K_2\supseteq\cdots\supseteq K_n .
\]
After seeing \(K_i\), the chaser must output a point \(y_i\in K_i\).  The quality of the
chaser is measured by comparing its movement to the best offline path that knew the entire
sequence in advance.

\begin{ncbevfdef}[Resettable nested-body chaser]
\label{def:chaser}
Fix a starting point \(a\in\R^d\).  A nested-body chaser is an online rule that, for every
nested sequence of nonempty compact convex requests
\[
        K_1\supseteq K_2\supseteq\cdots\supseteq K_n,
\]
outputs \(y_i\in K_i\) after observing \(K_i\).  We write \(y_0=a\).  The chaser is
\(\rho_d\)-competitive if, for every such request sequence,
\[
        \sum_{i=1}^n \norm{y_i-y_{i-1}}
        \le
        \rho_d
        \inf_{\substack{w_i\in K_i\\ w_0=a}}
        \sum_{i=1}^n \norm{w_i-w_{i-1}} .
\]
A family of chasers is called resettable if a fresh instance of the same rule can be started
from any prescribed point \(a\in\R^d\).
\end{ncbevfdef}

Here and throughout, \(\rho_d\) denotes the competitive-ratio guarantee of the specific chaser used by our algorithms, rather than the optimal ratio over all chasers.  We may and do take \(\rho_d\ge1\), since enlarging a valid competitive-ratio guarantee preserves its validity.

Resetability is essential because both the \CONES and \COCO algorithms operate in phases.  At a reset, the algorithm moves to a point obtained by constrained optimization and starts a fresh chaser there.  This movement remains part of the algorithm's trajectory and is charged in full; resetability only permits the geometric subroutine to begin a new phase from the reset point.

We also require the chaser to be deterministic.  Given the retained request history and the current request, its next output is uniquely determined and can therefore be evaluated tentatively.  If the proposal passes the acceptance test, the current request is retained in the phase history.  Otherwise the tentative update is discarded and the algorithm either keeps the previous history or starts a new phase.  This evaluation uses only the current and past requests and reveals no future information.

The known Euclidean nested-body chasing guarantee is usually stated for convex bodies
with nonempty interior.  We will also need to use requests that may have empty interior,
such as a line segment, a lower-dimensional face, or a singleton \(\{b\}\).  We therefore use
a compactness argument: thicken each compact convex request \(K\) to
\(K+\delta B_2\), run the full-dimensional chaser, and take a limit as \(\delta\downarrow0\).
This preserves the same competitive ratio.  The details are given in
\Cref{app:compact-requests}.

\begin{ncbevfthm}[Euclidean nested-body chasing]
\label{thm:known-chaser}
There exists a deterministic resettable nested-body chaser in Euclidean space with
competitive ratio
\[
        \rho_d
        =
        O\!\left(\sqrt{d\log(1+d)}\right).
\]
After the lower-dimensional-request extension in \Cref{app:compact-requests}, the chaser may be
used on arbitrary nonempty compact convex requests, including lower-dimensional sets and
singletons.
\end{ncbevfthm}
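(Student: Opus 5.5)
The first part of \Cref{thm:known-chaser} is imported, not reproved. The nearly optimal Euclidean guarantee $O(\sqrt{d\log(1+d)})$ for nested bodies with nonempty interior is the theorem of \citet{bubeck2020nested}. Their algorithm (a Steiner-point/centroid-type rule applied to the current body, suitably localized) is deterministic. Its output depends only on the starting point and the requests seen so far, so a fresh copy can be started from any prescribed $a$; this gives resetability. The substantive work is therefore the extension to arbitrary nonempty compact convex requests, including singletons, while keeping the same ratio $\rho_d$.

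For the extension I would fix $\delta>0$ and feed the full-dimensional chaser the thickened requests $K_i^\delta:=K_i+\delta B_2$. These are nested convex bodies with nonempty interior, and they satisfy $K_i^\delta\supseteq K_i$. Call the outputs $y_i^\delta\in K_i^\delta$. Since any feasible comparator path $w_i\in K_i$ is also feasible for the thickened problem, the competitive guarantee gives
\[
\sum_i\norm{y_i^\delta-y_{i-1}^\delta}\le\rho_d\,\mathrm{OPT}(K),
\]
where $\mathrm{OPT}(K)$ is the offline optimum for the original requests.

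To obtain an online rule that outputs points in $K_i$ itself, I would avoid limits over $\delta$ altogether and post-process with nearest-point projection: set $z_i:=\Pi_{K_i}(y_i^\delta)$. Then $\norm{z_i-y_i^\delta}\le\delta$, so
\[
\sum_i\norm{z_i-z_{i-1}}\le\rho_d\,\mathrm{OPT}+2n\delta,
\]
and this rule is deterministic and online.

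The additive $2n\delta$ term is the main obstacle, because the claim is a purely multiplicative ratio. I would handle it in one of two ways.

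\textbf{Option (a): shrinking thickening.} Take $\delta$ depending on the step, e.g. thicken $K_i$ by $\delta_i=\epsilon 2^{-i}$. Nestedness survives because $K_i+\delta_iB_2\subseteq K_{i-1}+\delta_{i-1}B_2$ when $\delta_i\le\delta_{i-1}$. The additive error becomes at most $2\epsilon$. When $\mathrm{OPT}>0$, the error can be absorbed by enlarging the constant in $\rho_d$, since $\rho_d$ is only claimed up to $O(\cdot)$. When $\mathrm{OPT}=0$, we have $a\in K_n$, and we make the rule output $a$ whenever $a\in K_i$, which is consistent with online play.

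\textbf{Option (b): compactness, as the paper indicates.} For a fixed finite request history, choose a sequence $\delta_k\downarrow0$ along which the outputs converge, using Bolzano--Weierstrass since all points lie in the bounded set $K_1+B_2$. Lower semicontinuity of path length together with $\bigcap_\delta K_i^\delta=K_i$ gives limit points $y_i\in K_i$ with the exact ratio $\rho_d$.

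The delicate part of option (b) is making the limit online and consistent across prefixes. I would address this by defining the subsequence by a deterministic tie-breaking rule, for example taking the lexicographically smallest limit point, applied stepwise. The point to check is that this choice made at step $i$ does not depend on future requests.

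I expect this online-consistency issue to be the crux. My first attempt would be option (a), the explicit step-dependent thickening, since it is manifestly online and deterministic.
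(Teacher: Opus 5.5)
\textbf{Gap in option (a).} Option (a), which you say you would try first, fails at the step ``when $\operatorname{OPT}>0$, the error can be absorbed by enlarging the constant in $\rho_d$.'' The constant hidden in $O(\sqrt{d\log(1+d)})$ is a single number that must work for every request sequence. But $\epsilon$ is fixed before any request arrives, and $\operatorname{OPT}\ge\dist(a,K_n)$ can be an arbitrarily small positive number. So the bound $\rho_d\operatorname{OPT}+2\epsilon$ gives no multiplicative ratio when $0<\operatorname{OPT}\ll\epsilon$. Your special treatment of $\operatorname{OPT}=0$ does not reach that regime.

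The missing idea is to tie the thickening to an online lower bound on $\operatorname{OPT}$. Output $a$ while $a\in K_i$. At the first round $i_0$ with $a\notin K_{i_0}$, the number $\eta:=\dist(a,K_{i_0})>0$ is known online and satisfies $\eta\le\dist(a,K_n)\le\operatorname{OPT}$. From then on, run the full-dimensional chaser from $a$ on $K_i+\delta_i\B$ with $\delta_i:=\epsilon\eta\,2^{-(i-i_0+1)}$, and project onto $K_i$. The additive error is then at most $2\epsilon\eta\le 2\epsilon\operatorname{OPT}$. By the triangle inequality, the offline optimum for $K_{i_0},\ldots,K_n$ from $a$ is at most that for $K_1,\ldots,K_n$. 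Hence the ratio is $\rho_d+2\epsilon=O(\sqrt{d\log(1+d)})$.

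Repaired this way, option (a) is a genuinely different route from the paper's. It loses an arbitrarily small amount in the ratio. In exchange it is explicit and computable from the base chaser plus Euclidean projections, whereas the paper's construction is only an existence argument, as the paper itself notes.

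\textbf{Option (b) stops at the crux.} Option (b) is the paper's route, but you stop at exactly the step the paper proves. The paper does not fix one convergent subsequence per prefix, which is the source of your consistency worry. Instead it lets $C_j$ be the compact set of all cluster points of the whole output vector $(y_1^{(n)},\ldots,y_j^{(n)})$ for the thickenings $K_i+n^{-1}\B$. It then shows that the projection of $C_j$ onto its first $j-1$ blocks is exactly $C_{j-1}$. One inclusion holds because the base chaser is prefix-consistent. The other follows by taking a subsequence realizing a given point of $C_{j-1}$ and extracting a further convergent subsequence of the bounded $j$th block.

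Lexicographic order in chronological block order is compatible with this projection, so the least element of $C_j$ extends the least element of $C_{j-1}$. This is what makes the selection an online deterministic rule. In your stepwise formulation, it is also what guarantees that the fiber over the previously chosen prefix is nonempty. Membership $y_i\in K_i$ and the competitive inequality with the same $\rho_d$ then pass to the limit along a subsequence realizing the selected point.
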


The full-dimensional version of \Cref{thm:known-chaser} is due to
\citet{bubeck2020nested}.  We use it as a black-box geometric subroutine.

\paragraph{Access assumptions and computational scope.}
Our results are information-theoretic.  We assume exact access to constrained minimizers, minimizer sets, and Steiner points.  We also treat the deterministic chaser as a functional subroutine whose next output can be evaluated tentatively from its retained request history.  If the proposal is accepted, that request is retained; after rejection, the algorithm starts a fresh chaser at the reset point.  The lower-dimensional-request extension in \Cref{app:compact-requests} is an existence argument.  Efficient realization of this extension and of the optimization primitives under standard oracle access remains a separate computational question.

\section{Endpoint control for nested-body chasing}
\label{sec:toolkit}

We use one basic consequence of nested-body chasing throughout the paper.  If a phase starts at a point \(a\) and later has a distinguished feasible point \(b\), then we can analyze the chaser as if the singleton request \(\{b\}\) were appended at the end of the phase.  This augmentation is used only in the analysis; the algorithm need not know \(b\) in advance.  Because the chaser is deterministic and online, appending the future singleton request leaves all earlier outputs unchanged.

\begin{ncbevflem}[Terminal-point control by singleton augmentation]
\label{lem:endpoint}
Let a \(\rho_d\)-competitive chaser start from \(a\), and let
\[
        K_1\supseteq K_2\supseteq\cdots\supseteq K_n
\]
be compact convex requests with outputs \(y_1,\ldots,y_n\).  If \(b\in K_n\), then
\[
        \sum_{i=1}^n \norm{y_i-y_{i-1}}+\norm{y_n-b}
        \le
        \rho_d\norm{a-b},
        \qquad y_0=a.
\]
Consequently, for every \(j\le n\),
\[
        \norm{y_j-b}\le \rho_d\norm{a-b}.
\]
\end{ncbevflem}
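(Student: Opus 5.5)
We augment the request sequence with the singleton $K_{n+1}:=\{b\}$ and apply the competitive guarantee of \Cref{def:chaser} to the augmented sequence. Since $b\in K_n$, we have $K_1\supseteq\cdots\supseteq K_n\supseteq\{b\}$, so this is a valid nested sequence of nonempty compact convex requests. Singletons are admissible by the lower-dimensional-request extension in \Cref{thm:known-chaser}. The chaser is deterministic and online, so its outputs $y_1,\dots,y_n$ on the augmented sequence coincide with those on the original sequence. Its forced output on the last request is $y_{n+1}=b$. Hence the left-hand side of the claimed inequality is exactly the chaser's total movement $\sum_{i=1}^{n+1}\norm{y_i-y_{i-1}}$ on the augmented sequence.

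Next we bound the offline optimum of the augmented sequence. The constant path $w_0=a$, $w_1=\cdots=w_{n+1}=b$ is feasible, because $b\in K_n\subseteq K_i$ for every $i\le n$ and $w_{n+1}=b\in K_{n+1}$. Its cost is $\norm{a-b}$. The infimum in \Cref{def:chaser} is therefore at most $\norm{a-b}$, and $\rho_d$-competitiveness gives
\[
\sum_{i=1}^n\norm{y_i-y_{i-1}}+\norm{y_n-b}\le\rho_d\norm{a-b}.
\]

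For the consequence, fix $j\le n$. By the triangle inequality,
\[
\norm{y_j-b}\le\sum_{i=j+1}^{n}\norm{y_i-y_{i-1}}+\norm{y_n-b}.
\]
The right-hand side is at most the full left-hand side of the main inequality, which is at most $\rho_d\norm{a-b}$. The case $j=0$ also holds, since $y_0=a$ and $\rho_d\ge1$.

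The only delicate point is justifying that the competitive guarantee applies to a sequence ending in a lower-dimensional request while leaving the earlier outputs unchanged. The first part is handled by citing the extension in \Cref{app:compact-requests}. The second part follows from determinism: each output depends only on the requests seen so far. If the extension is stated via a limit of thickened requests, the argument needs one extra check. The limiting chaser must be a fixed online rule, so that appending $\{b\}$ cannot alter earlier outputs. I would verify this against the appendix construction.
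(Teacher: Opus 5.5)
Your proof is correct and follows the paper's: you append the singleton $\{b\}$, use determinism so that $y_1,\ldots,y_n$ are unchanged, and compare with the offline path that jumps from $a$ to $b$ and stays there. For the consequence, the paper reapplies the augmentation to the prefix $K_1\supseteq\cdots\supseteq K_j\supseteq\{b\}$; your triangle-inequality derivation from the full inequality works equally well (and covers $j=0$ directly), and the prefix-consistency you wanted to check is exactly what the lexicographic cluster-point selection in \Cref{app:compact-requests} provides.
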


\begin{proof}
Consider the augmented request sequence
\[
        K_1\supseteq K_2\supseteq\cdots\supseteq K_n\supseteq \{b\}.
\]
This is a valid nested request sequence because \(b\in K_n\).  Since the chaser is deterministic and online, the outputs on the first \(n\) requests are still \(y_1,\ldots,y_n\).

The offline path that moves directly from \(a\) to \(b\) and then stays at \(b\) is feasible for the augmented sequence, because \(b\in K_n\subseteq K_i\) for every \(i\le n\).  Its total movement is \(\norm{a-b}\).  Competitiveness gives
\[
        \sum_{i=1}^n \norm{y_i-y_{i-1}}+\norm{y_n-b}
        \le
        \rho_d\norm{a-b}.
\]
This proves the first claim.  For the second claim, apply the same argument to the prefix \(K_1\supseteq\cdots\supseteq K_j\) and then append \(\{b\}\).  This is valid because \(b\in K_n\subseteq K_j\).  Dropping the nonnegative movement sum gives \(\norm{y_j-b}\le\rho_d\norm{a-b}\).
\end{proof}

\section{Applications to \CONES}\label{sec:cones}
We now apply the toolkit to the model in which the objective is fixed and the current feasible set is known before the learner acts.

\subsection{Strongly convex objectives}
\label{sec:cones-strongly-convex}

Assume that \(f\) is \(G\)-Lipschitz and \(\mu\)-strongly convex on \(X\).  For every \(t\), define the constrained minimizer and value
\[
u_t:=\argmin_{x\in S_t}f(x),
\qquad
v_t:=f(u_t).
\]
The values \(v_t\) are nondecreasing because the feasible sets are nested.

The algorithm starts each phase at a constrained minimizer.  Consider a phase beginning at round \(r\), and write \(d_t:=v_t-v_r\).  While \(d_t=0\), the proposal remains at \(u_r\).  At the first round of a nontrivial sublevel epoch, set \(R:=d_t\), start a fresh chaser from the preceding proposal, and feed it the requests
\begin{equation}\label{eq:sublevel-request-main}
K_t:=S_t\cap\set{x:f(x)\le v_r+2R}.
\end{equation}
The sublevel set is fixed within an epoch, so the requests are nested.  When \(d_t>2R\), the current epoch ends before round \(t\), and a new epoch begins with scale \(R:=d_t\).  Thus successive scales more than double.  This adaptive sublevel-set procedure depends only on information available at the current round.

The full algorithm tentatively evaluates the next proposal and retains it only if the cumulative-loss budget \(C_t\le tv_t\) remains valid.  Otherwise it resets to \(u_t\) and starts a new phase.  The following lemma isolates the movement and loss properties of the sublevel-set procedure.

\begin{ncbevflem}[Adaptive sublevel-set chasing]
\label{lem:sublevel-chasing}
Let
\[
S_r\supseteq S_{r+1}\supseteq\cdots\supseteq S_s
\]
be nonempty compact convex sets, and let \(f\) be \(\mu\)-strongly convex.  Define
\[
u_t:=\argmin_{x\in S_t}f(x),
\qquad
v_t:=f(u_t),
\]
and set
\[
a:=u_r,
\qquad
b:=u_s,
\qquad
\Delta:=v_s-v_r.
\]
The adaptive sublevel-set procedure above, using a resettable \(\rho_d\)-competitive nested-body chaser, produces points \(y_t\in S_t\), \(t=r+1,\ldots,s\), with \(y_r=a\), such that
\[
f(y_t)-v_s\le\Delta
\qquad\text{for every }t=r+1,\ldots,s,
\]
and
\[
\sum_{t=r+1}^{s}\norm{y_t-y_{t-1}}+\norm{y_s-b}
\le
C_{\rm sub}\rho_d\sqrt{\frac{\Delta}{\mu}},
\]
where \(C_{\rm sub}>0\) is universal; one may take \(C_{\rm sub}=12\).  The endpoint \(s\), the point \(b\), and the quantities \(v_s,\Delta\) occur only in the guarantee and are not used by the online procedure.
\end{ncbevflem}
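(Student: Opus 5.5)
The plan is to decompose the interval \(r+1,\ldots,s\) into the zero-increase stretch (where \(d_t=0\) and the proposal sits at \(u_r=a\)) followed by epochs \(1,\ldots,m\) with scales \(R_1<R_2<\cdots<R_m\), where \(R_{k+1}>2R_k\) and \(R_m\le d_s=\Delta\). Hence \(R_k\le 2^{-(m-k)}\Delta\). If \(\Delta=0\), all proposals equal \(a=u_r=u_s=b\) and both claims are trivial. So we may assume \(m\ge1\).

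\emph{Loss bound.} In epoch \(k\), every output lies in \(K_t=S_t\cap\set{x:f(x)\le v_r+2R_k}\). This set is nonempty because \(v_t=v_r+d_t\le v_r+2R_k\) throughout the epoch. The requests are nested, since \(S_t\) shrinks and the sublevel threshold is fixed. Thus \(f(y_t)\le v_r+2R_k\). I need \(v_r+2R_k-v_s\le\Delta\), i.e.\ \(2R_k\le 2\Delta\), which holds since \(R_k\le\Delta\). In the zero stretch, \(f(y_t)=v_r\le v_s\).

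\emph{Movement bound.} Let epoch \(k\) start from the preceding proposal \(a_k\) and run through rounds up to \(e_k\). Its final request is \(K_{e_k}\), and I take the singleton augmentation point \(b_k:=u_{e_k}\in K_{e_k}\). For the last epoch, \(e_m=s\) and \(b_m=b\). By \Cref{lem:endpoint}, the movement inside epoch \(k\) plus \(\norm{y_{e_k}-b_k}\) is at most \(\rho_d\norm{a_k-b_k}\). The key geometric step uses \eqref{eq:strong-min-gap} with \(t=1\) for the set \(S_{e_k}\). Every point \(x\in K_{e_k}\) satisfies \(f(x)-f(u_{e_k})\le v_r+2R_k-v_{e_k}\le 2R_k\), so
\[
\norm{x-u_{e_k}}\le 2\sqrt{R_k/\mu}.
\]
Now \(a_k\) is either \(a=u_r\) (for \(k=1\)) or \(y_{e_{k-1}}\in K_{e_{k-1}}\). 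In the second case, \(f(a_k)\le v_r+2R_{k-1}\le v_r+2R_k\). In the first case, \(f(a)=v_r\). Either way, \(a_k\in S_{r}\) has \(f\)-value at most \(v_r+2R_k\). However, \(a_k\) need not lie in \(S_{e_k}\), so the strong-convexity bound cannot be applied to it directly, and this is the main obstacle. I would handle it by the triangle inequality through the previous anchor:
\[
\norm{a_k-b_k}\le\norm{a_k-b_{k-1}}+\norm{b_{k-1}-b_k}.
\]
Here \(\norm{a_k-b_{k-1}}=\norm{y_{e_{k-1}}-u_{e_{k-1}}}\le 2\sqrt{R_{k-1}/\mu}\), and for \(k=1\) this term vanishes because \(b_0:=u_r=a\). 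Since \(u_{e_{k-1}}\in S_{e_{k-1}}\supseteq S_{e_k}\ni u_{e_k}\), applying \eqref{eq:strong-min-gap} on \(S_{e_{k-1}}\) with \(y=u_{e_k}\) gives
\[
\tfrac{\mu}{2}\norm{u_{e_k}-u_{e_{k-1}}}^2\le v_{e_k}-v_{e_{k-1}}\le 2R_k.
\]
Hence \(\norm{a_k-b_k}\le 4\sqrt{R_k/\mu}\).

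\emph{Summation.} Between epochs there is no extra movement, because each new chaser starts at the previous proposal. Therefore the total movement plus \(\norm{y_s-b}\) is bounded as follows. The intermediate terminal gaps \(\norm{y_{e_k}-b_k}\) for \(k<m\) are dropped, since they are only added, not needed. The last gap \(\norm{y_s-b}\) is included in epoch \(m\)'s bound. The total is therefore at most
\[
\sum_k\rho_d\cdot 4\sqrt{R_k/\mu}\le 4\rho_d\sqrt{\Delta/\mu}\sum_{j\ge0}2^{-j/2}.
\]
Since \(\sum_{j\ge0}2^{-j/2}=1/(1-2^{-1/2})\approx3.41\), the resulting constant is about \(13.7\). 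To reach \(C_{\rm sub}=12\), I would tighten the per-epoch constants, for instance by bounding \(\norm{a_k-b_k}\) more carefully via \(v_{e_k}-v_{e_{k-1}}\le R_k\)-type refinements. The universal constant, however, is already established by the argument above.
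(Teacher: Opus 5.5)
Your proof is correct and establishes the lemma with a universal constant. Its skeleton matches the paper's: a zero stretch, epochs whose scales more than double, one singleton augmentation per epoch, geometric summation of $\sqrt{R_k}$, and the loss bound via $R_k\le\Delta$.

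The only difference is the per-epoch distance estimate. You triangulate through the previous anchor $u_{e_{k-1}}$ and apply strong convexity over $S_{e_{k-1}}$ twice. The paper instead measures both the epoch start $a_k$ and the epoch's terminal minimizer against the fixed phase minimizer $a=u_r$, using strong convexity over $S_r$, which contains every point involved. On that route the obstacle you identify ($a_k\notin S_{e_k}$) never arises. You already noted $a_k\in S_r$ with $f(a_k)\le v_r+2R_k$, which is exactly the input it needs. Sharpening this with $2R_{k-1}<R_k$ gives $f(a_k)-v_r<R_k$, hence $\norm{a_k-a}\le\sqrt{2R_k/\mu}$. Together with $\norm{u_{e_k}-a}\le 2\sqrt{R_k/\mu}$, the per-epoch factor is $2+\sqrt2$ and the total constant is $(2+\sqrt2)^2=6+4\sqrt2<12$.

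Your chain gives only $4(2+\sqrt2)\approx 13.7$, so as written it does not establish the stated value $C_{\rm sub}=12$. The refinement you sketch also rests on an inequality that fails in general. One only has $v_{e_k}-v_{e_{k-1}}\le 2R_k-R_{k-1}$. For example, if $d_{e_{k-1}}=R_{k-1}\ll R_k$ and $d_{e_k}=2R_k$ (allowed, since an epoch continues while $d_t\le 2R_k$), then $v_{e_k}-v_{e_{k-1}}$ is close to $2R_k$, not bounded by $R_k$.

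The valid sharpening inside your chain argument is on the other term. Since $d_{e_{k-1}}\ge R_{k-1}$, every $x\in K_{e_{k-1}}$ satisfies $f(x)-v_{e_{k-1}}\le 2R_{k-1}-d_{e_{k-1}}\le R_{k-1}$. Hence $\norm{a_k-b_{k-1}}\le\sqrt{2R_{k-1}/\mu}<\sqrt{R_k/\mu}$ and $\norm{a_k-b_k}<3\sqrt{R_k/\mu}$, giving a total of $3(2+\sqrt2)\approx 10.2<12$. This affects only the explicit constant; the subsequent theorem uses $C_{\rm sub}$ only as a universal constant.
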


To see the mechanism behind the lemma, consider an epoch of scale \(R\).  Strong convexity places both its starting point and a terminal feasible minimizer within \(O(\sqrt{R/\mu})\) of the phase minimizer.  Singleton augmentation therefore bounds the epoch movement, including its terminal distance, by \(O(\rho_d\sqrt{R/\mu})\).  Since the epoch scales more than double,
\[
\sum_e\sqrt{R_e}=O(\!\sqrt{\Delta}\,),
\]
which gives the stated movement bound.  Membership in the sublevel request \eqref{eq:sublevel-request-main} gives the loss bound.  A complete proof appears in \Cref{app:sublevel-chasing}.

\begin{algorithm}[H]
\caption{Budgeted adaptive sublevel-set chasing for strongly convex \CONES}
\label{alg:cones-sc}
\begin{algorithmic}[1]
\Require Nested sets \(S_t\), objective \(f\), resettable chaser \(\Chase\)
\State Observe \(S_1\), compute \(u_1\gets\argmin_{x\in S_1}f(x)\), and set \(v_1\gets f(u_1)\).
\State Play \(x_1\gets u_1\), set \(C_1\gets v_1\), and start a phase at \(r\gets1\).
\State Initialize the adaptive sublevel-set proposal procedure at \(u_1\).
\For{\(t=2,3,\ldots,T\)}
    \State Observe \(S_t\), compute \(u_t\gets\argmin_{x\in S_t}f(x)\), and set \(v_t\gets f(u_t)\).
    \State Tentatively evaluate the sublevel-set procedure for the current phase, obtaining \(y_t\in S_t\).
    \If{\(C_{t-1}+f(y_t)\le tv_t\)}
        \State Accept: play \(x_t\gets y_t\) and retain the updated proposal history.
    \Else
        \State Reset: play \(x_t\gets u_t\), discard the tentative update, set \(r\gets t\), and restart the proposal procedure at \(u_t\).
    \EndIf
    \State Set \(C_t\gets C_{t-1}+f(x_t)\).
\EndFor
\end{algorithmic}
\end{algorithm}

\begin{ncbevfthm}[Strongly convex \CONES upper bound]
\label{thm:cones-sc}
Suppose \(f\) is \(G\)-Lipschitz and \(\mu\)-strongly convex on a compact convex set \(X\) of diameter at most \(D\).  \Cref{alg:cones-sc} satisfies nonpositive prefix regret,
\[
        \sum_{\tau=1}^{t} f(x_\tau)-t v_t\le0
        \qquad\text{for every }t\le T,
\]
and internal movement
\[
        \M_T
        \le
        C\rho_d
        \sqrt{\frac{(v_T-v_1)(1+\log T)}{\mu}}
        \le
        C\rho_d
        \sqrt{\frac{GD\log(eT)}{\mu}}.
\]
Here \(C>0\) is a universal constant.  If an initial point \(x_0\in X\) is prescribed, then \(\M_T(x_0)\) is bounded by the same right-hand side plus \(D\).  With the Euclidean chaser of \Cref{thm:known-chaser},
\[
        \M_T
        \le
        C\sqrt{d\log(1+d)}
        \sqrt{\frac{GD\log(eT)}{\mu}},
\]
which gives the displayed polynomial dependence on the dimension.
\end{ncbevfthm}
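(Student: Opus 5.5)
Prefix regret is nonpositive by induction on the budget invariant \(C_t\le tv_t\), where \(C_t=\sum_{\tau\le t}f(x_\tau)\). At \(t=1\) we have \(C_1=v_1\). At an accepted round the test \(C_{t-1}+f(y_t)\le tv_t\) is exactly the invariant. At a reset round, \(C_t=C_{t-1}+v_t\le(t-1)v_{t-1}+v_t\le tv_t\), because \(v\) is nondecreasing.

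For movement, split the trajectory into phases \([r_k,r_{k+1})\) with reset rounds \(r_1=1<r_2<\dots<r_m\le T\). The movement of phase \(k\), including its terminal jump to \(u_{r_{k+1}}\), is charged via \Cref{lem:sublevel-chasing}. Apply the lemma with \(r=r_k\) and \(s=r_{k+1}\). The accepted points \(x_t\), for \(r_k<t<s\), coincide with the procedure's outputs \(y_t\), because the retained history is exactly the unreset run. The rejected tentative output \(y_s\) is never played, so the actual jump is \(\norm{x_{s-1}-u_s}\le\norm{y_{s-1}-y_s}+\norm{y_s-u_s}\). This is bounded by the lemma's sum including its terminal term. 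Alternatively, apply the lemma on \([r_k,s-1]\) and add \(\norm{u_{s-1}-u_s}\). The first route is cleaner, since the lemma explicitly covers \(t=s\) with \(b=u_s\). For the last phase, take \(s=T\) and drop the terminal term. Each phase then costs at most \(12\rho_d\sqrt{\Delta_k/\mu}\) with \(\Delta_k=v_{r_{k+1}}-v_{r_k}\), and the \(\Delta_k\) telescope to at most \(v_T-v_1\). By Cauchy–Schwarz, \(\M_T\le12\rho_d\sqrt{m(v_T-v_1)/\mu}\).

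The main obstacle is bounding the number of phases: show \(r_{k+1}>2r_k\), which gives \(m\le 1+\log_2 T=O(1+\log T)\). Suppose a reset occurs at \(s\) in a phase starting at \(r\). Rejection means \(C_{s-1}+f(y_s)>sv_s\). Bound the left side using \(C_{r}\le rv_r\) from the invariant at the phase start. For \(r<t\le s\), the loss bound \(f(y_t)\le v_s+\Delta\) from \Cref{lem:sublevel-chasing}, applied with endpoint \(s\), gives \(f(y_t)\le 2v_s-v_r\). Hence
\[
sv_s<rv_r+(s-r)(2v_s-v_r)=(s-r)\,2v_s-(s-2r)v_r .
\]
Rearranging gives \((s-2r)(v_s-v_r)>0\), and since \(v_s\ge v_r\) this forces \(s>2r\). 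If there is a subtlety here (for instance \(v_s=v_r\) makes rejection impossible, since then \(y_t=u_r\) has \(f=v_s\)), the same inequality still shows a reset needs \(s>2r\). One must also check that the loss bound applies to every accepted \(y_t\) with the final endpoint \(s\), not only with intermediate endpoints. This holds because the lemma's bound is stated with respect to \(v_s\), and the procedure never uses \(s\).

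Finally, \(v_T-v_1\le G\,\norm{u_1-u_T}\le GD\) by the Lipschitz property, and \(1+\log T\le 2\log(eT)\) up to constants. For a prescribed \(x_0\), add \(\norm{x_1-x_0}\le D\). Substituting \(\rho_d=O(\sqrt{d\log(1+d)})\) from \Cref{thm:known-chaser} yields the last display.
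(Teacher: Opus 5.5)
Your proposal is correct and follows the paper's proof essentially step for step. It uses the same budget invariant $C_t\le tv_t$ and obtains $s>2r$ the same way, by combining the rejection inequality, $C_r\le rv_r$, and the per-round loss bound of \Cref{lem:sublevel-chasing} with endpoint $s$. It also charges the reset jump through the rejected tentative proposal by the triangle inequality, sums by Cauchy--Schwarz over at most $1+\log_2 T$ phases, and finishes with $v_T-v_1\le GD$, the extra $D$ for a prescribed $x_0$, and the substitution of $\rho_d$, exactly as the paper does.
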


The proof is given in \Cref{app:cones-sc-proof}.  The main point is that if a phase begins at round \(r\) and ends with a reset at round \(s\), the sublevel-set requests force \(s>2r\).  The movement in each phase is bounded by a constant multiple of the square root of the increase in the constrained optimum over that phase.  Summing over phases gives the \(O(\sqrt{\log T})\) bound.

We next show that this dependence on the horizon is optimal even when the dimension and all geometric and objective parameters are fixed.  The lower bound is stated for randomized algorithms with an expected terminal-regret guarantee.  For every fixed nested sequence, convexity allows a randomized policy to be replaced by its mean policy without increasing either expected terminal loss or expected movement; this reduction is proved in \Cref{app:cones-randomization}.

\begin{ncbevfthm}[Terminal-regret lower bound for strongly convex \CONES]
\label{thm:terminal-strongly-convex-cones-lower}
There exist a fixed compact convex domain
\[
        X=[-1,1]\times[1,2],
\]
a fixed objective
\[
        f(x)=\frac12\norm{x}^2,
\]
and a universal constant \(c>0\) such that the following holds.  Fix a horizon \(T\) and a number \(R_T\ge0\).  Suppose a randomized feasible \CONES algorithm for this fixed domain and objective satisfies
\[
        \E\left[\sum_{t=1}^{T}f(X_t)-T v_T\right]
        \le R_T
\]
for every nested sequence of nonempty compact convex feasible sets contained in \(X\) and having length \(T\).  If \(T/(R_T+1)\) is sufficiently large, then there is a deterministic nested feasible-set sequence, constructed for that algorithm but independent of its realized random seed, for which
\[
        \E[\M_T]
        \ge
        c\sqrt{\log\left(\frac{T}{R_T+1}\right)}.
\]
Moreover, \(\diam(X)=\sqrt5\), \(f\) is \(1\)-strongly convex, and \(f\) is \(\sqrt5\)-Lipschitz on \(X\), all independently of \(T\) and \(R_T\).  In particular, if \(R_T\le C_{\rm reg}T^\beta\) for fixed \(C_{\rm reg}>0\) and \(\beta\in[0,1)\), then
\[
        \E[\M_T]=\Omega(\sqrt{\log T}),
\]
where the implicit constant may depend on \(C_{\rm reg}\) and \(\beta\).
\end{ncbevfthm}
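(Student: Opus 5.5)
The plan is to use an adaptive "rotating tangent cut" construction that reduces to a deterministic mean trajectory, with stopping arguments that turn any lag into terminal regret. First, by the randomization reduction of \Cref{app:cones-randomization}, I will replace the randomized algorithm by its mean policy \(\bar x_t:=\E[X_t]\). Along any fixed nested sequence, convexity gives \(f(\bar x_t)\le\E f(X_t)\) and \(\sum_t\norm{\bar x_t-\bar x_{t-1}}\le\E[\M_T]\). Since \(S_t\) is convex, \(\bar x_t\in S_t\). Online-ness means that \(\bar x_t\) depends only on \(S_1,\dots,S_t\). It therefore suffices to build a deterministic adversary against the deterministic, feasible, online map \(t\mapsto\bar x_t\), whose terminal regret is at most \(R_T\) on every sequence.

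\emph{Construction.} Fix \(K\asymp\log(T/(R_T+1))\) and \(\delta:=1/\sqrt K\). Start from \(p_0=(0,1)\), which minimizes \(f\) on \(X\). Build points \(p_k=p_{k-1}+\sigma_k\delta\,e_k\), where \(e_k\) is a unit vector orthogonal to \(p_{k-1}\) and \(\sigma_k\in\{\pm1\}\). Then \(\norm{p_k}^2=1+k\delta^2\le2\), so the optimum values \(v^{(k)}=\tfrac12\norm{p_k}^2\) increase by exactly \(\delta^2/2\) per phase and by at most \(1/2\) in total. Phase \(k\) uses the feasible set
\[
S^{(k)}:=X\cap\bigcap_{j\le k}\set{x:\ip{x}{p_j}\ge\norm{p_j}^2}
\]
for a block of \(L_k\) rounds, with \(L_k=4^kL_0\) and \(L_0\asymp K(R_T+1)\), so that \(T\asymp L_0 4^K\). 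The sign \(\sigma_k\) is chosen after seeing the mean point \(q\) at the end of phase \(k-1\); one of the two choices puts \(p_k\) at distance at least \(\delta\) from \(q\) (roughly), since the two candidates are \(2\delta\) apart. A geometric lemma will check that \(p_k\in X\) and that \(p_k\) is the minimizer of \(f\) on \(S^{(k)}\), i.e.\ that \(\ip{p_k}{p_j}\ge\norm{p_j}^2\) for \(j<k\). This requires controlling the cross terms \(\ip{e_i}{p_j}\). If needed, I would restrict all \(e_i\) to a narrow fan so that the total angle stays small, or replace the cuts by a tangent cut to the sphere of radius \(\norm{p_k}\) intersected with a wedge. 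I expect this to be a routine but fiddly verification.

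\emph{Stopping argument.} For each \(k\) consider the truncated sequence that follows phases \(1,\dots,k\) and then keeps \(S^{(k)}\) for all remaining rounds. Its prefix coincides with the full sequence, so the algorithm's behaviour on it is identical. On this sequence \(v_T=v^{(k)}\), and the terminal regret is
\[
\sum_t\bigl(f(\bar x_t)-v_t\bigr)-\sum_t\bigl(v^{(k)}-v_t\bigr).
\]
By \eqref{eq:strong-min-gap}, \(f(x)-v_t\ge\tfrac12\norm{x-p_t}^2\) with \(\mu=1\). The subtracted "banked" term is at most \(\sum_{j<k}L_j(k-j)\delta^2/2\lesssim L_{k-1}/K\), because the increments of \(v\) are only \(\delta^2/2\) each. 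This is the key point that avoids the \(\log\log T\) loss of Version~2: the geometric ratio of the \(L_k\) can be a constant rather than \(K\). Suppose that during the second half of phase \(k\) (or of the constant tail) the mean stays at distance at least \(\delta/4\) from \(p_k\). Then the positive term is at least \(cL_k\delta^2=cL_k/K\). Choosing the constant ratio \(4\) and \(L_0\) large makes this exceed \(R_T+CL_{k-1}/K\), which contradicts the regret guarantee. Hence in each phase the mean must come within \(\delta/4\) of \(p_k\), while it started at distance about \(\delta\). It therefore moves at least about \(\delta/2\) per phase, so
\[
\E[\M_T]\ge\sum_k\delta/2\gtrsim K\delta=\sqrt K\asymp\sqrt{\log(T/(R_T+1))}.
\]

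The main obstacle is the sign-selection step combined with the geometry. I must guarantee simultaneously that the new minimizer is \(\Omega(\delta)\) away from the algorithm's current mean, that it is the exact constrained minimizer of the intersected cuts (nestedness forces keeping all earlier halfplanes), and that everything stays inside \([-1,1]\times[1,2]\) for all \(K\) phases. I will first try choosing \(e_k\) orthogonal to \(p_{k-1}\) with the sign pointing away from the mean, and bound the cumulative rotation angle. The bound \(\sum\delta^2\le1\) should keep every \(p_k\) within a small cap, but the cross-term condition \(\ip{p_k-p_j}{p_j}\ge0\) may fail when signs alternate. If it does, I will replace the halfplane cuts by intersections with slabs around the current radius direction.
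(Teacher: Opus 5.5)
Your construction breaks at the step you call a routine geometric lemma, and in the opposite case from the one you anticipate. Each step is exactly orthogonal to \(p_{k-1}\), so \(p_k\) lies on the boundary of \(H_{k-1}:=\set{x:\ip{x}{p_{k-1}}\ge\norm{p_{k-1}}^2}\) with no slack. Writing \(e_k=Jp_{k-1}/\norm{p_{k-1}}\), with \(J\) the rotation by \(90^\circ\), a short computation gives
\[
\ip{p_k-p_{k-2}}{p_{k-2}}=-\sigma_k\sigma_{k-1}\,\delta^2\,\frac{\norm{p_{k-2}}}{\norm{p_{k-1}}}.
\]
So it is a \emph{repeated} sign that puts \(p_k\) outside \(H_{k-2}\), hence outside \(S^{(k)}\). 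Then \(p_k\) is not the constrained minimizer, \(v^{(k)}\neq\frac12\norm{p_k}^2\), and your stopping computation no longer refers to the right targets. Your far-from-the-mean rule produces exactly this case. If the mean is still near \(p_{k-2}\) (a lagging learner), the candidate far from it is the one continuing in direction \(\sigma_{k-1}\). In \(\R^2\) there are only two unit vectors orthogonal to \(p_{k-1}\), so the ``narrow fan'' remedy is not available. Same-sign runs would also rotate by about \(\sum\delta\asymp\sqrt K\) radians and leave \(X\); it is \(\sum\delta\), not \(\sum\delta^2\), that governs the angle.

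The missing observation is that adaptivity is unnecessary. Take \(\sigma_k=(-1)^k\). The angular increments \(\arctan(\delta/\norm{p_{i-1}})\) then form an alternating sum of decreasing terms, so \(\abs{\theta_k-\theta_j}\le\arctan(\delta/\norm{p_j})\). This gives \(\ip{p_k}{p_j}\ge\norm{p_k}\norm{p_j}^2/\norm{p_{j+1}}\ge\norm{p_j}^2\) and \(p_k\in X\). Movement should then come from consecutive visits rather than an adaptive starting distance. The mean enters the \(\delta/4\)-balls around \(p_k\) and \(p_{k+1}\), which are \(\delta\) apart, so it moves at least \(\delta/2\) between those visits. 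This is the paper's mechanism: \(\sigma_k=(-1)^k\), with movement at least \(2h-2\varepsilon\) between entry times. Alternatively, give each step a radial component. The paper's targets \(u_k=(\sigma_kh,1+4kh^2)\) satisfy \(\ip{u_j}{u_k-u_j}\ge2h^2\) for every sign pattern.

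Your schedule with ratio \(4\) also fails. With \(L_k=4^kL_0\), the banked credit is \(\frac{\delta^2}{2}\sum_{j<k}L_j(k-j)=\frac89\delta^2L_{k-1}\). Staying \(\delta/4\)-far during the second half of phase \(k\) forces excess only \(L_k\delta^2/64=\delta^2L_{k-1}/16\). Enlarging \(L_0\) does not help, because both quantities scale with \(L_{k-1}\); \(L_0\) only absorbs \(R_T\). A larger constant ratio \(\gamma\) works. The credit is at most \(\frac{\delta^2}{2}\frac{\gamma}{(\gamma-1)^2}L_k\), so \((\gamma-1)^2>32\gamma\) (e.g.\ \(\gamma=64\)) suffices. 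The constraint \(\sum_kL_k\le T\) still allows \(K\asymp\log(T/(R_T+1))\).

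With both repairs your argument becomes a valid route, organized differently from the paper's. The paper advances a phase as soon as the learner enters the target, with deadlines \(L_k=\lfloor(P_k+R_T)/\eta\rfloor+1\) computed from the credit actually accrued. It then has to bound elapsed time through \(W_k\le324^{k-1}(B+1)\). Your fixed geometric schedule with ``freeze after phase \(k\)'' threat sequences avoids that bookkeeping. With deterministic alternating signs, it even gives a single hard sequence that does not depend on the algorithm.
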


The proof is given in \Cref{app:terminal-strongly-convex-cones-lower}.  The construction uses \(K=\Theta(\log(T/(R_T+1)))\) nested halfspaces whose constrained minimizers alternate horizontally by \(\Theta(K^{-1/2})\).  If the learner does not approach the current minimizer quickly enough, the adversary stops introducing new sets and the terminal regret budget is violated.  Therefore the learner must visit neighborhoods of all \(K\) minimizers, forcing movement \(\Omega(\sqrt K)\).

\subsection{Uniformly sharp objectives}
The next result shows that a stronger growth condition removes the logarithmic dependence on \(T\).  For a compact convex set \(K\), let \(s(K)\) denote its Steiner point.  Algorithm~\ref{alg:sharp} forms the constrained minimizer set \(\mathcal A_t\) exactly and evaluates its Steiner point.

\begin{ncbevfassump}[Uniform set-relative sharpness]\label{ass:sharp}
There is a number \(\alpha>0\) such that for every \(t\) and every \(x\in S_t\),
\[
        f(x)-v_t\ge \alpha\dist(x,\mathcal A_t),
        \qquad
        \mathcal A_t:=\argmin_{y\in S_t} f(y),\quad v_t:=\min_{y\in S_t}f(y).
\]
\end{ncbevfassump}

\begin{algorithm}[H]
\caption{Steiner tracking for uniformly sharp \CONES}
\label{alg:sharp}
\begin{algorithmic}[1]
\For{\(t=1,2,\ldots,T\)}
    \State Observe \(S_t\), form \(\mathcal A_t=\argmin_{x\in S_t} f(x)\), and play \(x_t=s(\mathcal A_t)\).
\EndFor
\end{algorithmic}
\end{algorithm}

\begin{ncbevfthm}[Uniformly sharp \CONES]\label{thm:cones-sharp}
Suppose that \(f\) is \(G\)-Lipschitz on \(X\) and that \Cref{ass:sharp} holds. Then \Cref{alg:sharp} satisfies
\[
        \Reg_T^{\CONES}\le 0
\]
and
\[
        \M_T
        \le
        dD+\frac{2d}{\alpha}(v_T-v_1)
        \le
        dD+\frac{2dGD}{\alpha}.
\]
If an initial point \(x_0\in X\) is prescribed, at most an additional \(D\) is required.  In particular, the movement is independent of \(T\).
\end{ncbevfthm}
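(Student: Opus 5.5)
The plan is to prove the regret claim from the fact that the Steiner point lies in its set, and the movement claim from the support-function representation of the Steiner point combined with a one-sided (excess) bound between consecutive minimizer sets, which is where \Cref{ass:sharp} enters.

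\emph{Regret.} Each \(\mathcal A_t\) is a nonempty compact convex set: it is the minimizer set of a continuous convex function over the compact convex set \(S_t\). The Steiner point of a compact convex set lies in that set, so \(x_t=s(\mathcal A_t)\in\mathcal A_t\subseteq S_t\) and \(f(x_t)=v_t\). Because the sets are nested, \(v_t\le v_T\) for every \(t\), which gives \(\Reg_T^{\CONES}=\sum_t(v_t-v_T)\le0\).

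\emph{Movement.} I will use the classical representation
\[
s(K)=d\int_{\mathbb S^{d-1}}h_K(u)\,u\,d\sigma(u),
\]
where \(h_K\) is the support function and \(\sigma\) is the uniform probability measure on the sphere. Write \(\Delta_t(u):=h_{\mathcal A_{t+1}}(u)-h_{\mathcal A_t}(u)\). Since \(\norm{u}=1\), this gives
\[
\norm{x_{t+1}-x_t}\le d\int\abs{\Delta_t(u)}\,d\sigma(u).
\]
The key step is the one-sided control. Let
\[
e_t:=\sup_{y\in\mathcal A_{t+1}}\dist(y,\mathcal A_t).
\]
Every point of \(\mathcal A_{t+1}\) lies within \(e_t\) of \(\mathcal A_t\), so \(h_{\mathcal A_{t+1}}\le h_{\mathcal A_t}+e_t\), that is, \([\Delta_t]_+\le e_t\). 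Using \(\abs{x}=-x+2x_+\), I get
\[
\abs{\Delta_t(u)}\le -\Delta_t(u)+2e_t.
\]
Summing over \(t\), the first term telescopes to \(h_{\mathcal A_1}(u)-h_{\mathcal A_T}(u)\). This difference is at most \(D\), because \(\mathcal A_T\subseteq\mathcal A_1\)-type containment is not available, but both sets lie in \(X\), which has diameter at most \(D\); hence \(h_{\mathcal A_1}(u)-h_{\mathcal A_T}(u)\le\max_{z\in\mathcal A_1}\dist(z,\mathcal A_T)\le D\). Integrating, I obtain
\[
\M_T\le dD+2d\sum_{t=1}^{T-1}e_t.
\]

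\emph{Bounding the excess via sharpness.} Take \(y\in\mathcal A_{t+1}\). Then \(y\in S_{t+1}\subseteq S_t\), so \Cref{ass:sharp} at round \(t\) gives
\[
\alpha\,\dist(y,\mathcal A_t)\le f(y)-v_t=v_{t+1}-v_t.
\]
Hence \(e_t\le(v_{t+1}-v_t)/\alpha\), and the sum telescopes to \((v_T-v_1)/\alpha\). This yields \(\M_T\le dD+\tfrac{2d}{\alpha}(v_T-v_1)\). Next, \(v_T-v_1=f(u_T)-f(u_1)\le G\norm{u_T-u_1}\le GD\) for any minimizers \(u_T\in\mathcal A_T\), \(u_1\in\mathcal A_1\), by the Lipschitz property. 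Finally, a prescribed \(x_0\) adds at most \(\norm{x_1-x_0}\le D\).

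The main obstacle is the movement step: the Hausdorff distance between consecutive minimizer sets can be large, so the plain Lipschitz bound on the Steiner point does not suffice. The argument therefore rests on the decomposition \(\abs{x}=-x+2x_+\), which splits the change in support function into a telescoping shrinkage part, controlled by the diameter, and a one-sided growth part, controlled by sharpness. Beyond that, I only need to make sure the normalization in the Steiner-point formula gives exactly the factor \(d\).
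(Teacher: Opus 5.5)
Your proof is correct and follows essentially the same route as the paper: Steiner point in \(\mathcal A_t\) for the regret, the sharpness-derived inclusion \(\mathcal A_{t+1}\subseteq\mathcal A_t+\frac{v_{t+1}-v_t}{\alpha}\B\), the support-function bound \(\norm{s(K)-s(L)}\le d\int\abs{h_K-h_L}\,d\sigma\), and a telescoping estimate. Your decomposition \(\abs{x}=-x+2x_+\) is the same device as the paper's inequality \(\sum\abs{a_t-a_{t-1}}\le\abs{a_T-a_1}+2\sum(a_t-a_{t-1})_+\), with the endpoint term bounded by \(D\) because all the minimizer sets lie in \(X\).
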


The proof is in \Cref{app:sharp-proof}.  The key point is that sharpness implies
\[
        \mathcal A_t\subseteq\mathcal A_{t-1}+\frac{v_t-v_{t-1}}{\alpha}\B.
\]
The support-function representation of the Steiner point then converts this one-sided inclusion into a telescoping movement bound.

\section{One-step-delayed chasing for \COCO}
\label{sec:coco}

Both \COCO algorithms maintain a post-update feasible point \(z_t\in S_t\) and play the previously computed point \(x_t=z_{t-1}\), because \(S_t\) becomes available only after the round-\(t\) action.  For either algorithm, write
\[
P_T^z:=\sum_{t=1}^T\norm{z_t-z_{t-1}}
\]
for the movement of this post-update feasible sequence.  Since \(z_t\) satisfies the newly revealed constraint, Lipschitzness bounds both the round-\(t\) violation and the loss incurred by the one-step delay through \(\norm{z_t-z_{t-1}}\).  The remaining task is therefore to control \(P_T^z\).

For strongly convex losses, curvature of the cumulative loss controls the displacement between successive reset points and yields logarithmic dependence on \(T\).  For general convex losses, we add a fixed quadratic regularizer; the resulting algorithm obtains \(O(\sqrt T)\) regret and cumulative constraint violation without an additional \(\log T\) factor.

\subsection{Strongly convex \COCO}
\label{sec:coco-strong}

Assume throughout this subsection that the losses satisfy
\Cref{ass:coco}.  Define
\[
        F_t(x):=\sum_{\tau=1}^t f_\tau(x),
        \qquad
        u_t:=\argmin_{x\in S_t}F_t(x),
        \qquad
        v_t:=F_t(u_t).
\]
The point \(u_t\) is the constrained minimizer of the cumulative loss over the
current feasible set.  The algorithm tentatively evaluates the chaser on the current
set \(S_t\).  It accepts the response if the post-update cumulative loss remains
no larger than the current constrained cumulative minimum \(v_t\).  Otherwise,
it resets to \(u_t\).

Let
\[
        A_t:=\sum_{\tau=1}^t f_\tau(z_\tau),
        \qquad
        A_0:=0.
\]

\begin{algorithm}[H]
\caption{One-step-delayed chasing with constrained-leader resets}
\label{alg:coco}
\begin{algorithmic}[1]
\Require Initial point \(z_0\in X\), resettable chaser \(\Chase\)
\State Set \(S_0\gets X\), \(F_0\equiv0\), and \(A_0\gets0\).
\State Play \(x_1=z_0\) and observe \(f_1,g_1\).
\State Set
\[
        S_1\gets S_0\cap\set{x\in X:g_1(x)\le0},
        \qquad
        F_1\gets f_1.
\]
\State Compute
\[
        u_1\gets \argmin_{x\in S_1}F_1(x),
        \qquad
        v_1\gets F_1(u_1).
\]
\State Set \(z_1\gets u_1\), \(A_1\gets v_1\), and initialize \(\Chase\) at \(z_1\).
\For{\(t=2,3,\ldots,T\)}
    \State Play \(x_t=z_{t-1}\) and observe \(f_t,g_t\).
    \State Update
    \[
            S_t\gets S_{t-1}\cap\set{x\in X:g_t(x)\le0},
            \qquad
            F_t\gets F_{t-1}+f_t.
    \]
    \State Compute
    \[
            u_t\gets \argmin_{x\in S_t}F_t(x),
            \qquad
            v_t\gets F_t(u_t).
    \]
    \State Tentatively evaluate \(\Chase\) on request \(S_t\), obtaining \(\widehat z_t\in S_t\).
    \If{\(A_{t-1}+f_t(\widehat z_t)\le v_t\)}
        \State Accept: set \(z_t\gets\widehat z_t\) and retain the updated chaser history.
    \Else
        \State Reset: set \(z_t\gets u_t\), discard the tentative update, and restart \(\Chase\) at \(z_t\).
    \EndIf
    \State Set \(A_t\gets A_{t-1}+f_t(z_t)\).
\EndFor
\end{algorithmic}
\end{algorithm}

\begin{ncbevfthm}[Strongly convex \COCO]
\label{thm:coco-main}
Under \Cref{ass:coco}, let \Cref{alg:coco} use a resettable
\(\rho_d\)-competitive Euclidean nested-body chaser.  Then, for every
\(T\ge1\),
\[
        P_T^z
        \le
        (1+\rho_d)D+
        \rho_d\left(D+\frac{2G_f\rho_d}{\mu}\right)\log(eT),
\]
and
\[
        \Reg_T\le G_f P_T^z,
        \qquad
        \CCV_T\le G_g P_T^z.
\]
More precisely, if
\[
        B_T:=v_T-A_T,
\]
then \(B_T\ge0\) and
\[
        \Reg_T\le G_fP_T^z-B_T.
\]
\end{ncbevfthm}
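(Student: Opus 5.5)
The proof splits into a loss part, which is an exact invariant argument, and a movement part, which is where the real work lies. Throughout, write \(F_t\), \(u_t\), \(v_t\) and \(A_t\) as in \Cref{alg:coco}.

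\emph{Loss part.} First I would prove the budget invariant \(A_t\le v_t\) for all \(t\) by induction.
\begin{itemize}[nosep]
\item At \(t=1\), equality holds by construction.
\item On an accepted round, \(A_t=A_{t-1}+f_t(\widehat z_t)\le v_t\) directly from the acceptance test.
\item On a reset round, use \(u_t\in S_t\subseteq S_{t-1}\):
\[
A_t=A_{t-1}+f_t(u_t)\le v_{t-1}+f_t(u_t)\le F_{t-1}(u_t)+f_t(u_t)=v_t .
\]
\end{itemize}
This gives \(B_T\ge0\). Next, since \(x_t=z_{t-1}\) and \(f_t\) is \(G_f\)-Lipschitz,
\[
\sum_t f_t(x_t)\le A_T+G_fP_T^z .
\]
The benchmark equals \(\min_{S_T}F_T=v_T\), so \(\Reg_T\le G_fP_T^z-B_T\). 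For the violation, \(z_t\in S_t\) gives \(g_t(z_t)\le0\), hence \([g_t(x_t)]_+\le G_g\norm{z_t-z_{t-1}}\). Summing yields \(\CCV_T\le G_gP_T^z\).

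\emph{Phase decomposition of \(P_T^z\).} The first step \(\norm{z_1-z_0}\le D\) is charged separately. Then I would decompose the trajectory into phases \([r,s)\): a phase begins with a reset, or at round \(1\), at \(z_r=u_r\), and the next reset occurs at round \(s\). Inside a phase the chaser receives the nested requests \(S_{r+1}\supseteq\cdots\supseteq S_{s-1}\), and the reset target satisfies \(u_s\in S_s\subseteq S_{s-1}\). By \Cref{lem:endpoint} with \(a=u_r\) and \(b=u_s\), the total movement of the phase, including the reset jump \(\norm{z_{s-1}-u_s}\), is at most \(\rho_d\norm{u_r-u_s}\). The final unfinished phase is handled the same way with \(b=u_T\), at cost at most \(\rho_dD\); together with the initial step this accounts for the \((1+\rho_d)D\) term. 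Applying the second claim of \Cref{lem:endpoint} to the prefix that includes the rejected tentative request \(S_s\) also gives
\[
\norm{\widehat z_s-u_s}\le\rho_d\norm{u_r-u_s}.
\]

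\emph{Main obstacle: the reset inequality.} The key step is to turn a reset into a quantitative constraint. A reset at \(s\) means \(A_{s-1}+f_s(\widehat z_s)>v_s\). With the slack \(B_{s-1}=v_{s-1}-A_{s-1}\ge0\), and using \(F_{s-1}(u_s)-v_{s-1}\ge\frac{\mu(s-1)}{2}\norm{u_s-u_{s-1}}^2\) from \eqref{eq:strong-min-gap}, I expect to obtain
\[
f_s(\widehat z_s)-f_s(u_s)>B_{s-1}+\tfrac{\mu(s-1)}{2}\norm{u_s-u_{s-1}}^2 .
\]
The left side is at most \(G_f\rho_d\norm{u_r-u_s}\). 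I would look for an analogous strong-convexity comparison anchored at \(u_r\) using \(F_s\), which is \(\mu s\)-strongly convex. The aim is a bound of the form
\[
\norm{u_r-u_s}\lesssim \frac{G_f\rho_d}{\mu s}\cdot(\text{phase length})+\text{a term forcing } s\ge 2r\text{ when } \norm{u_r-u_s}\text{ is large}.
\]
Per-phase costs would then be either \(\rho_dD\), with only \(O(\log T)\) such phases because the times at least double, or \(2G_f\rho_d^2/(\mu s)\) summed over rounds, giving a harmonic sum \(\le\log(eT)\).

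This case split, especially using the retained slack \(B_{s-1}\) to absorb the negative regret accumulated before the reset, is the step I expect to be delicate. I would first try to compare \(F_s(\widehat z_s)\) with \(F_s(u_s)\), using that on accepted rounds inside the phase the partial sums of \(f_\tau(z_\tau)\) stay below the corresponding leader values.
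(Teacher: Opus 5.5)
Your loss part and your phase decomposition are correct and match the paper. This covers the invariant \(A_t\le v_t\), the reductions \(\Reg_T\le G_fP_T^z-B_T\) and \(\CCV_T\le G_gP_T^z\), and \(P_T^z\le D+\rho_d\sum_k\delta_k+\rho_dD\) with \(\delta_k=\norm{u_{r_{k+1}}-u_{r_k}}\).

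The gap is the step you flag yourself. You never establish an inequality that controls the phase displacement \(\delta=\norm{u_r-u_s}\). So the bound \(\sum_k\delta_k\le(D+2G_f\rho_d/\mu)\log(eT)\), which is the whole content of the movement estimate, is missing.

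Your one-step inequality is valid but anchored at the wrong time. Its right side involves \(\norm{u_s-u_{s-1}}\), which can be negligible even when \(\delta\) is of order \(D\), because the leader can drift gradually across a phase. It also keeps \(B_{s-1}\) as an uninterpreted lump, so it cannot bound \(\delta\). The comparison you propose next, anchored at \(u_r\) with \(F_s\), does not work either: \(u_r\) minimizes \(F_r\) over \(S_r\), not \(F_s\).

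The missing idea is to unroll the budget back to the start of the phase instead of to round \(s-1\). Write \(y_t\) for the chaser outputs in the phase, with \(y_s=\widehat z_s\) the rejected one. Every proposal at rounds \(r+1,\ldots,s-1\) was accepted, so \(A_{s-1}=A_r+\sum_{t=r+1}^{s-1}f_t(y_t)\). Combining this with the rejection test and \(v_s=F_r(u_s)+\sum_{t=r+1}^{s}f_t(u_s)\) gives
\[
\sum_{t=r+1}^{s}\bigl(f_t(y_t)-f_t(u_s)\bigr)
>F_r(u_s)-A_r
=F_r(u_s)-F_r(u_r)+B_r
\ge\frac{\mu r}{2}\delta^2+B_r .
\]
The last step is \eqref{eq:strong-min-gap} at time \(r\), applied to \(u_s\in S_s\subseteq S_r\).

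Each summand on the left is at most \(G_f\rho_d\delta\) by the second claim of \Cref{lem:endpoint}, which you already derived. Dropping \(B_r\ge0\) then yields \((s-r)G_f\rho_d\delta>\frac{\mu r}{2}\delta^2\). Equivalently, \(r_{k+1}/r_k>1+\mu\delta_k/(2G_f\rho_d)\) whenever \(\delta_k>0\).

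Multiplying over completed phases gives \(\prod_k\bigl(1+\mu\delta_k/(2G_f\rho_d)\bigr)\le r_m/r_0\le T\). Take logarithms and use \(\log(1+x)\ge x/(1+M)\) for \(0\le x\le M:=\mu D/(2G_f\rho_d)\), which is legitimate because \(\delta_k\le D\). This gives exactly \(\sum_k\delta_k\le(D+2G_f\rho_d/\mu)\log(eT)\). The case \(G_f=0\) is trivial, since strong convexity then forces \(X\) to be a singleton.

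Your intended case split (\(s>2r\) versus harmonic sums) would work once this inequality is available, but only up to constants. For instance, the long phases contribute \(\rho_dD\log_2T\), which exceeds \(\rho_dD\log(eT)\) for large \(T\). So the split would not reproduce the constants in the statement; the \(\log(1+x)\) step is what gives them.
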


\begin{ncbevfcor}[Explicit dimension-dependent bounds]
\label{cor:coco-dim}
With the Euclidean chaser of \Cref{thm:known-chaser}, there is a universal
constant \(C>0\) such that
\[
        \Reg_T
        \le
        C\left(
        G_fD\sqrt{d\log(1+d)}\log(eT)
        +
        \frac{G_f^2}{\mu}d\log(1+d)\log(eT)
        \right),
\]
and
\[
        \CCV_T
        \le
        C\left(
        G_gD\sqrt{d\log(1+d)}\log(eT)
        +
        \frac{G_fG_g}{\mu}d\log(1+d)\log(eT)
        \right).
\]
\end{ncbevfcor}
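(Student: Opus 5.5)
The corollary follows by substituting the ratio of \Cref{thm:known-chaser} into the movement bound of \Cref{thm:coco-main} and absorbing constants. Take \(\rho_d\le c_0\sqrt{d\log(1+d)}\) for a universal \(c_0\); we may enlarge \(c_0\) so that \(\rho_d\ge1\), as the paper permits. \Cref{thm:coco-main} gives
\[
P_T^z\le (1+\rho_d)D+\rho_d D\log(eT)+\frac{2G_f\rho_d^2}{\mu}\log(eT).
\]

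I would bound each term separately. Since \(\log(eT)\ge1\) and \(\rho_d\ge1\), we have \((1+\rho_d)D\le 2\rho_d D\le 2\rho_d D\log(eT)\). The first two terms together are therefore at most \(3\rho_d D\log(eT)\le 3c_0 D\sqrt{d\log(1+d)}\log(eT)\). For the third term, \(\rho_d^2\le c_0^2\, d\log(1+d)\), so it is at most \(2c_0^2\frac{G_f}{\mu}d\log(1+d)\log(eT)\).

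Next I apply the two inequalities \(\Reg_T\le G_fP_T^z\) and \(\CCV_T\le G_gP_T^z\) from \Cref{thm:coco-main}. Multiplying the bound on \(P_T^z\) by \(G_f\) gives the regret display, with a \(G_f^2/\mu\) term. Multiplying by \(G_g\) gives the CCV display, with a \(G_fG_g/\mu\) term. The constant \(C=\max(3c_0,2c_0^2)\) works for both.

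There is no real obstacle. The only points needing care are the normalization \(\rho_d\ge1\), which lets the additive \((1+\rho_d)D\) term be absorbed into the \(\log(eT)\)-scaled term, and the fact that \(\log(eT)\ge1\) for \(T\ge1\).
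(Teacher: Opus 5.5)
Your proposal is correct and follows the same route as the paper: the corollary is obtained by substituting \(\rho_d=O(\sqrt{d\log(1+d)})\) from \Cref{thm:known-chaser} into the movement bound of \Cref{thm:coco-main} and then applying \(\Reg_T\le G_fP_T^z\) and \(\CCV_T\le G_gP_T^z\). Your use of \(\rho_d\ge1\) (with \(c_0\ge 1/\sqrt{\log 2}\), so that the normalization is compatible with the bound \(c_0\sqrt{d\log(1+d)}\)) and of \(\log(eT)\ge1\) to absorb the additive \((1+\rho_d)D\) term is exactly the bookkeeping the statement requires.
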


The proof of \Cref{thm:coco-main} is given in
\Cref{app:coco-proof}.  The proof has three main steps.  First, because
\(z_t\in S_t\) and the algorithm plays \(z_{t-1}\), Lipschitzness bounds the
round-\(t\) constraint violation by \(G_g\norm{z_t-z_{t-1}}\).  Second, the
same movement controls regret up to the nonnegative slack \(B_T\).  Third, at a
reset, strong convexity forces the elapsed time since the previous reset to
grow as a function of the displacement between the corresponding constrained
cumulative minimizers.  Combining these reset inequalities yields the
logarithmic movement bound.

\subsection{General convex \COCO with square-root regret and violation}
\label{sec:coco-convex}

We now consider the general convex-loss regime.  The algorithm is similar in
spirit to the strongly convex construction above.  The losses provide no
uniform curvature, so we add a fixed quadratic regularizer.  This produces a regularized leader whose displacement between
resets can be controlled using the same singleton-augmentation argument for nested-body
chasing.

Fix a parameter \(\lambda>0\) and an initial point \(z_0\in X\).  For
\(t\ge0\), define
\[
        F_t(x):=\sum_{s=1}^t f_s(x),
        \qquad
        H_t(x):=\frac{\lambda}{2}\norm{x-z_0}^2+F_t(x),
\]
with \(F_0\equiv0\).  Let
\[
        u_t:=\argmin_{x\in S_t}H_t(x),
        \qquad
        v_t:=H_t(u_t).
\]
At time zero, \(u_0=z_0\) and \(v_0=0\).  The algorithm maintains a
post-update feasible point \(z_t\in S_t\) and the post-update cumulative loss
\[
        A_t:=\sum_{s=1}^t f_s(z_s),
        \qquad
        A_0:=0.
\]

\begin{algorithm}[H]
\caption{One-step-delayed chasing with regularized-leader resets}
\label{alg:coco-convex}
\begin{algorithmic}[1]
\Require Initial point \(z_0\in X\), regularization \(\lambda>0\), resettable chaser \(\Chase\)
\State Set \(S_0\gets X\), \(F_0\equiv0\), \(A_0\gets0\), and initialize \(\Chase\) at \(z_0\).
\For{\(t=1,2,\ldots,T\)}
    \State Play \(x_t=z_{t-1}\) and observe \(f_t,g_t\).
    \State Update
    \[
            S_t\gets S_{t-1}\cap\set{x\in X:g_t(x)\le0},
            \qquad
            F_t\gets F_{t-1}+f_t.
    \]
    \State Define
    \[
            H_t(x)=\frac{\lambda}{2}\norm{x-z_0}^2+F_t(x).
    \]
    \State Compute
    \[
            u_t\gets \argmin_{x\in S_t}H_t(x),
            \qquad
            v_t\gets H_t(u_t).
    \]
    \State Tentatively evaluate \(\Chase\) on request \(S_t\), obtaining \(y_t\in S_t\).
    \If{\(A_{t-1}+f_t(y_t)\le v_t\)}
        \State Accept: set \(z_t\gets y_t\) and retain the updated chaser history.
    \Else
        \State Reset: set \(z_t\gets u_t\), discard the tentative update, and restart \(\Chase\) at \(z_t\).
    \EndIf
    \State Set \(A_t\gets A_{t-1}+f_t(z_t)\).
\EndFor
\end{algorithmic}
\end{algorithm}

\begin{ncbevfthm}[Convex \COCO with square-root bounds]
\label{thm:coco-convex-main}
Under Assumption~\ref{ass:coco-convex}, let \Cref{alg:coco-convex} use a
resettable \(\rho_d\)-competitive Euclidean nested-body chaser.  For every
\(\lambda>0\),
\[
        P_T^z
        \le
        \rho_dD+\frac{2G_f\rho_d^2T}{\lambda},
\]
and
\[
        \Reg_T
        \le
        \frac{\lambda D^2}{2}+G_fP_T^z,
        \qquad
        \CCV_T\le G_gP_T^z.
\]
Consequently, with
\[
        \lambda=\frac{2G_f\rho_d\sqrt T}{D},
\]
we have
\[
        \Reg_T=O\!\left(G_fD\rho_d\sqrt T\right),
        \qquad
        \CCV_T=O\!\left(G_gD\rho_d\sqrt T\right).
\]
Using the Euclidean chaser of \Cref{thm:known-chaser}, this gives
\[
        \Reg_T=O\!\left(G_fD\sqrt{d\log(1+d)}\sqrt T\right),
        \qquad
        \CCV_T=O\!\left(G_gD\sqrt{d\log(1+d)}\sqrt T\right).
\]
Thus both quantities are \(O(\sqrt T)\) in the horizon, with no additional \(\log T\) factor and with the explicit dimension dependence shown above.

If instead
\[
        \lambda=\frac{2G_f\rho_d^2\sqrt T}{D},
\]
then
\[
        \Reg_T=O\!\left(G_fD\rho_d^2\sqrt T\right)
        =O\!\left(G_fD\,d\log(1+d)\sqrt T\right),
        \qquad
        \CCV_T=O\!\left(G_gD(\sqrt T+\rho_d)\right).
\]
Thus the growing \(\sqrt T\) term in \(\CCV_T\) can be made independent of the
dimension at the cost of a larger dimension-dependent regret constant.
\end{ncbevfthm}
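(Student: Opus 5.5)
The plan is to reduce everything to the movement \(P_T^z\) of the post-update sequence, using two invariants. \emph{Feasibility}: \(z_t\in S_t\) for every \(t\). \emph{Budget}: \(A_t\le v_t\) for every \(t\). Movement is then bounded phase by phase, combining \Cref{lem:endpoint} with the \(\lambda\)-strong convexity of \(H_t\).

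\textbf{Step 1 (violation and regret).} Since \(z_t\in S_t\subseteq\set{g_t\le0}\) and \(x_t=z_{t-1}\), Lipschitzness gives
\[
[g_t(x_t)]_+\le g_t(z_t)_+ + G_g\norm{z_t-z_{t-1}} = G_g\norm{z_t-z_{t-1}} .
\]
Summing over \(t\) yields \(\CCV_T\le G_gP_T^z\).

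For the budget, start from \(A_0=0=v_0\). On acceptance, the invariant \(A_t\le v_t\) holds by the test itself. On a reset, \(u_t\in S_t\subseteq S_{t-1}\), so
\[
A_{t-1}+f_t(u_t)\le v_{t-1}+f_t(u_t)\le H_{t-1}(u_t)+f_t(u_t)=H_t(u_t)=v_t .
\]
Hence
\[
\sum_t f_t(x_t)\le A_T+G_fP_T^z\le v_T+G_fP_T^z\le H_T(x^\star)+G_fP_T^z\le F_T(x^\star)+\tfrac{\lambda D^2}{2}+G_fP_T^z
\]
for \(x^\star\in\argmin_{S_T}F_T\). This gives the regret bound.

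\textbf{Step 2 (one phase).} Consider a phase that starts at round \(r\) with chaser origin \(a=u_r\) (for the first phase, \(r=0\) and \(u_0=z_0\)). Its accepted outputs are \(y_{r+1},\dots,y_{s-1}\), and it ends with a reset at round \(s\). Set \(b=u_s\in S_s\subseteq S_t\) for \(r<t\le s\) and \(\delta=\norm{u_r-u_s}\).

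Apply \Cref{lem:endpoint} to the requests \(S_{r+1},\dots,S_{s-1}\). The phase movement, including the reset jump \(\norm{z_{s-1}-u_s}\), is then at most \(\rho_d\delta\). The second claim of the lemma, applied to the requests through the tentative \(S_s\), gives \(\norm{y_t-u_s}\le\rho_d\delta\) for every \(r<t\le s\).

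The reset test failed, and \(A_r\le v_r\) by the budget invariant. Writing \(v_s=H_r(u_s)+\sum_{t=r+1}^s f_t(u_s)\), we get
\[
H_r(u_s)-v_r<\sum_{t=r+1}^{s}\bigl(f_t(y_t)-f_t(u_s)\bigr)\le G_f(s-r)\rho_d\delta .
\]
Because \(H_r\) is \(\lambda\)-strongly convex on \(S_r\ni u_s\) and \(u_r\) is its minimizer there, the left side is at least \(\tfrac{\lambda}{2}\delta^2\). It follows that \(\delta<2G_f\rho_d(s-r)/\lambda\), and the phase movement is at most \(2G_f\rho_d^2(s-r)/\lambda\).

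\textbf{Step 3 (summing).} The completed phases have disjoint lengths \(s-r\), which sum to at most \(T\), so together they contribute at most \(2G_f\rho_d^2T/\lambda\). The final unfinished phase has movement at most \(\rho_d D\), by competitiveness against any single point of \(S_T\), which has offline cost at most \(D\). This gives
\[
P_T^z\le\rho_dD+\frac{2G_f\rho_d^2T}{\lambda} .
\]

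\textbf{Step 4 (tunings).} With \(\lambda=2G_f\rho_d\sqrt T/D\), we have \(P_T^z\le\rho_dD+\rho_dD\sqrt T\) and \(\lambda D^2/2=G_f\rho_dD\sqrt T\), so both bounds are \(O(\rho_d\sqrt T)\). With \(\lambda=2G_f\rho_d^2\sqrt T/D\), we have \(P_T^z\le\rho_dD+D\sqrt T\), which gives the \(\CCV\) bound, while the regularizer contributes \(G_f\rho_d^2D\sqrt T\). Substituting \(\rho_d=O(\sqrt{d\log(1+d)})\) from \Cref{thm:known-chaser} finishes the proof.

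\textbf{Main obstacle.} The delicate step is the reset inequality in Step 2. It must use only \(A_r\le v_r\), not equality, since slack may have accumulated before the reset. It must also include the tentative proposal \(y_s\). That is why the endpoint lemma is applied with \(b=u_s\in S_s\), including the round-\(s\) request.
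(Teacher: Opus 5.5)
Your proposal is correct and follows essentially the same route as the paper. It uses the budget invariant \(A_t\le v_t\), the reset inequality combining \(\lambda\)-strong convexity of \(H_r\) with singleton augmentation at \(b=u_s\) to get \(\delta_k\le 2G_f\rho_d(r_{k+1}-r_k)/\lambda\), per-phase movement \(\rho_d\delta_k\) plus \(\rho_d D\) for the unfinished phase, and the one-step-delay Lipschitz reduction for regret and violation. The only cosmetic difference is in bounding completed-phase movement: you apply the endpoint lemma directly to the accepted requests \(S_{r+1},\dots,S_{s-1}\), while the paper passes through the rejected tentative output \(y_s\) and the triangle inequality; both are valid.
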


\begin{ncbevfrem}[Tuning and degenerate cases]
The bounds preceding the choices of \(\lambda\) hold for every \(\lambda>0\) and require no advance knowledge of the horizon.  The two displayed choices of \(\lambda\) assume that \(T\) is known and that \(D,G_f>0\).  If \(D=0\), the decision set is a singleton and movement and violation are trivial.  If \(G_f=0\), every loss is constant on \(X\), so any positive \(\lambda\) may be used in place of the displayed tuning formulas; the parameter-free inequalities remain valid.
\end{ncbevfrem}

The proof of \Cref{thm:coco-convex-main} is given in
\Cref{app:coco-convex-proof}.  It follows the same two-step template used in
the strongly convex case.  A budget invariant controls regret, and singleton
augmentation converts the chaser's competitive guarantee into a movement bound.
The quadratic regularizer supplies the curvature needed to bound reset
displacements even though the losses themselves are merely convex.

\section{Conclusion}\label{sec:discussion}
We developed a common movement-based approach to online optimization with nested feasible regions.  For strongly convex \CONES, adaptive sublevel-set chasing gives nonpositive regret at every prefix and
\[
O\!\left(\sqrt{d\log(1+d)}\sqrt{\frac{GD\log(eT)}{\mu}}\right)
\]
movement.  The finer guarantee adapts to the increase in the constrained optimum value.  A two-dimensional randomized lower bound of \(\Omega(\sqrt{\log T})\), with all geometric and objective parameters fixed, shows that the horizon dependence is optimal for every terminal-regret guarantee \(O(T^\beta)\) with fixed \(\beta<1\).  Under uniform linear growth away from the constrained minimizer sets, Steiner-point tracking instead yields movement independent of \(T\).

For \COCO, one-step-delayed feasible sequences reduce regret and cumulative constraint violation to geometric movement.  Regularized-leader resets give \(O(\sqrt T)\) guarantees for general convex losses without an additional logarithmic factor, while constrained-leader resets give \(O(\log T)\) guarantees for strongly convex losses.  Substituting the Euclidean nested-chasing ratio makes the dependence on dimension explicit and polynomial.

The results are information-theoretic and use exact constrained optimization, exact Steiner points, and tentative evaluation of a deterministic chaser.  Developing comparably sharp guarantees under standard projection, separation, or first-order oracle access is an important next step.  It also remains open to determine the optimal joint dependence on dimension, regret, and cumulative constraint violation, and to close the dimension-dependent gap in strongly convex \CONES.
\bibliographystyle{plainnat}
\bibliography{refs}
\newpage
\appendix

\section{Detailed comparison with the two versions of the \CONES paper}
\label{app:cones-version-comparison}

This appendix records the calculations underlying the concise comparison in \Cref{sec:related}.  We distinguish the first and second versions of arXiv:2605.07386 because their formal statements and constructions differ materially.

\subsection{Version 1: strongly convex lower-bound parameters}
Version~1 imposes the conditions
\[
a>\frac{B^2}{4\delta},
\qquad
\delta\le\frac{b-a}{K}.
\]
The first inequality is equivalent to
\[
\delta>\frac{B^2}{4a},
\]
rather than \(\delta<B^2/(4a)\).  Combining it with the second condition gives
\[
K<\frac{4a(b-a)}{B^2}.
\]
Consequently, when \(a,b,B\) and the domain parameters are fixed independently of \(T\), the number of phases is bounded by a constant.  Under these fixed parameters, the construction therefore cannot yield a lower bound based on a logarithmically growing number of phases.

\subsection{Version 1: proposed uniformly sharp instance}
The sharp-objective construction in Version~1 uses \(f(x)=c\norm{x}\) at nonzero constrained minimizers.  At a target minimizer \(u\), the current supporting halfspace is active, all preceding halfspaces contain \(u\) strictly, and at least one sufficiently small nonzero tangent displacement remains inside the rectangle.  Along such a feasible displacement \(h\perp u\),
\[
\frac{f(u+h)-f(u)}{\norm{h}}
=
\frac{c\norm{h}}{\norm{u+h}+\norm{u}}
\longrightarrow0.
\]
Hence no positive constant independent of the horizon can satisfy the stated set-relative sharpness inequality for that instance.

\subsection{Version 2 and comparison with the present results}
Version~2 repairs the strongly convex construction by taking
\[
B=K^{-1/2},
\qquad
\delta=\frac{b-a}{K},
\]
so that \(B^2/(4\delta)=1/(4(b-a))\) is independent of \(K\).  It tracks a phase-growth factor \(\gamma=\Theta(K)\) and chooses \(K=\Theta(\log T/\log\log T)\).  Its formal lower-bound theorem is therefore
\[
\Omega\!\left(\sqrt{\frac{\log T}{\log\log T}}\right)
\]
under a regret condition imposed at every prefix.  The formal uniformly sharp lower-bound theorem from Version~1 is absent from Version~2.  Although a few surrounding sentences retain the older phrase \(\Omega(\log T)\), the abstract, formal theorem, and final calculation state the rate above.

The adaptive sublevel-set upper bound and lower-bound construction in this paper were developed independently and concurrently with Version~2.  Our upper bound reduces the horizon dependence from \(O(\log T)\) to \(O(\sqrt{\log T})\), replaces the \(O(d^{d/2})\) projection-path factor by the Euclidean chasing ratio \(O(\sqrt{d\log(1+d)})\), and removes smoothness.  The lower bound improves the rate to \(\Omega(\sqrt{\log T})\), applies to randomized algorithms, and assumes only a terminal expected-regret guarantee.  The adversarial sequence is deterministic once the algorithm is fixed and is independent of its realized random seed.

\section{Technical details for nested-body chasing}\label{app:chasing-toolkit}

\subsection{Extension to lower-dimensional requests}\label{app:compact-requests}
\begin{proof}[Proof of the lower-dimensional-request extension in \Cref{thm:known-chaser}]
Assume first that a deterministic prefix-consistent \(\rho_d\)-competitive chaser is given for full-dimensional compact convex requests, from every prescribed starting point.  We construct a deterministic chaser for arbitrary nonempty compact convex requests.

Fix a starting point \(a\) and a finite nested request prefix \(K_1\supseteq\cdots\supseteq K_j\).  For each integer \(n\ge1\), define the full-dimensional enlargements
\[
        K_i^{(n)}:=K_i+n^{-1}\B,
        \qquad i=1,\ldots,j.
\]
Run the full-dimensional chaser on this enlarged prefix and let
\[
        Y_j^{(n)}=(y_1^{(n)},\ldots,y_j^{(n)})
\]
be the resulting output vector.  Since all enlarged sets lie in the compact set \(K_1+\B\) for all large \(n\), the sequence \(Y_j^{(n)}\) has cluster points.  Let \(C_j\) be the compact set of all such cluster points, and choose the lexicographically least element \(\overline Y_j\) of \(C_j\) using a fixed coordinate basis and chronological block order.

The choices are prefix-consistent.  Indeed, the projection of \(C_j\) onto its first \(j-1\) blocks is exactly \(C_{j-1}\).  One inclusion follows because the original chaser is prefix-consistent.  For the reverse inclusion, take a convergent subsequence producing any point of \(C_{j-1}\); compactness gives a further subsequence of the last output block, which yields an element of \(C_j\) projecting to the chosen point.  Lexicographic minimization is compatible with this projection, so the first \(j-1\) blocks of \(\overline Y_j\) equal \(\overline Y_{j-1}\).  Therefore the last block of \(\overline Y_j\) defines a deterministic online chaser.

Each limiting output belongs to the original request.  To see this, note that \(y_i^{(n)}\in K_i+n^{-1}\B\), hence \(\dist(y_i^{(n)},K_i)\le n^{-1}\).  Passing to a convergent subsequence gives \(y_i\in K_i\), since \(K_i\) is closed.

It remains to check competitiveness.  For every \(n\), the full-dimensional guarantee gives
\[
        \sum_{i=1}^j \norm{y_i^{(n)}-y_{i-1}^{(n)}}
        \le
        \rho_d\,\operatorname{OPT}(a;K_1^{(n)},\ldots,K_j^{(n)}),
\]
where \(y_0^{(n)}=a\).  Since \(K_i\subseteq K_i^{(n)}\),
\[
        \operatorname{OPT}(a;K_1^{(n)},\ldots,K_j^{(n)})
        \le
        \operatorname{OPT}(a;K_1,
        \ldots,K_j).
\]
Taking a subsequential limit that realizes \(\overline Y_j\) and using continuity of the norm proves the same \(\rho_d\)-competitive inequality for the extended chaser.  Since the construction can be performed from any starting point, the chaser is resettable.  Prefix consistency makes the extended rule deterministic, so its next output is a well-defined function of the retained request history.  This argument establishes existence of the extension; efficient finite-time evaluation of the selected cluster point remains a separate computational question.
\end{proof}

\section{Deferred proofs for \CONES}\label{app:cones}

\subsection{Adaptive sublevel-set chasing}
\label{app:sublevel-chasing}

\begin{proof}[Proof of \Cref{lem:sublevel-chasing}]
We first record a standard consequence of constrained optimality and strong convexity.  Since \(a\) minimizes \(f\) over \(S_r\), for every \(x\in S_r\),
\[
        f(x)-f(a)
        \ge
        \frac{\mu}{2}\norm{x-a}^2.
\]
To see this, fix \(\theta\in(0,1)\).  The point \((1-\theta)a+\theta x\) belongs to \(S_r\).  By optimality of \(a\) and strong convexity,
\[
        f(a)
        \le
        f((1-\theta)a+\theta x)
        \le
        (1-\theta)f(a)+\theta f(x)-\frac{\mu}{2}\theta(1-\theta)\norm{x-a}^2.
\]
After subtracting \((1-\theta)f(a)\), dividing by \(\theta\), and letting \(\theta\downarrow0\), the claim follows.

For \(t\ge r\), write \(d_t:=v_t-v_r\).  The sequence \(d_t\) is nondecreasing.  As long as \(d_t=0\), the procedure outputs \(y_t=a\).  This is feasible because strong convexity makes the minimizer over \(S_r\) unique, so \(d_t=0\) implies \(u_t=a\in S_t\).

Once \(d_t>0\), the procedure works in sublevel epochs.  At the first round \(p\) of an epoch, set \(R:=d_p\) and start a fresh nested-body chaser from the previous output.  During this epoch, feed the chaser the requests
\[
        K_t:=S_t\cap\set{x:f(x)\le v_r+2R}.
\]
The requests are nested inside the epoch because the sublevel set is fixed and the feasible sets are nested.  The request is nonempty whenever \(d_t\le2R\), since \(u_t\in S_t\) and \(f(u_t)=v_r+d_t\le v_r+2R\).  If \(d_t>2R\), the epoch is stopped before round \(t\), a new epoch begins, and the new scale is set to \(R:=d_t\).  Hence successive epoch scales more than double.

Consider one epoch with scale \(R\), starting from a point \(q\).  In the first nontrivial epoch, \(q=a\).  In a later epoch, \(q\) is the final output of the previous epoch.  Since the previous scale is less than \(R/2\), the construction gives \(f(q)-v_r<R\).  The strong-convexity inequality above implies
\[
        \norm{q-a}\le \sqrt{\frac{2R}{\mu}}.
\]
Consider first a nonfinal epoch, and let \(e\) denote its last round.  Since no new epoch was started on round \(e\), we have \(d_e\le2R\), so \(u_e\in K_e\).  The same strong-convexity inequality gives
\[
        \norm{u_e-a}\le 2\sqrt{\frac{R}{\mu}}.
\]
The point \(u_e\) belongs to every request in the epoch: it lies in \(S_e\subseteq S_t\) for earlier times and has objective value at most \(v_r+2R\).  Thus the offline path that moves from \(q\) to \(u_e\) and then stays there is feasible for the epoch, with movement at most
\[
        \norm{q-u_e}\le (2+\sqrt2)\sqrt{\frac{R}{\mu}}.
\]
By competitiveness, the chaser movement in this epoch is at most the same quantity multiplied by \(\rho_d\).

For the final epoch, append the singleton request \(\{b\}\).  At the end of this epoch, \(\Delta=d_s\le2R\), otherwise a new epoch would have started.  Hence \(b\) belongs to every request in the final epoch.  Singleton augmentation gives the same bound for the movement in the final epoch plus the terminal distance \(\norm{y_s-b}\).

Let \(R_1,\ldots,R_E\) be the sublevel scales used up to time \(s\).  Since the scales more than double and the final scale is at most \(\Delta\),
\[
        \sum_{e=1}^{E}\sqrt{R_e}
        \le
        (2+\sqrt2)\sqrt{\Delta}.
\]
Summing the epoch bounds gives
\[
        \sum_{t=r+1}^{s}\norm{y_t-y_{t-1}}+\norm{y_s-b}
        \le
        12\rho_d\sqrt{\frac{\Delta}{\mu}}.
\]
This proves the movement bound.

It remains to prove the loss bound.  If \(y_t=a\), then \(f(y_t)=v_r\le v_s\).  Otherwise, \(y_t\) belongs to an epoch of scale \(R\), so \(f(y_t)\le v_r+2R\).  That scale was equal to \(d_p\) at the beginning of some epoch with \(p\le s\), hence \(R\le d_s=\Delta\).  Therefore
\[
        f(y_t)-v_s
        \le
        v_r+2R-(v_r+\Delta)
        \le
        \Delta.
\]
The proof is complete.
\end{proof}

\subsection{Proof of the strongly convex \CONES upper bound}
\label{app:cones-sc-proof}

\begin{proof}[Proof of \Cref{thm:cones-sc}]
Let \(C_t:=\sum_{\tau=1}^{t}f(x_\tau)\).  We first prove the budget invariant
\[
        C_t\le t v_t
        \qquad\text{for every }t.
\]
The invariant holds at \(t=1\) because \(x_1=u_1\).  Suppose it holds up to time \(t-1\).  If the proposal is accepted, the acceptance test gives \(C_t=C_{t-1}+f(y_t)\le t v_t\).  If the proposal is rejected, the algorithm plays \(u_t\), and therefore
\[
        C_t=C_{t-1}+v_t\le (t-1)v_{t-1}+v_t\le t v_t,
\]
because \(v_t\) is nondecreasing.  Hence the invariant holds.  Since \(v_t\le v_T\), it implies \(C_T\le T v_T\), and therefore \(\Reg_T^{\CONES}\le0\).  The same argument gives nonpositive regret at every prefix.

Let \(1=r_0<r_1<\cdots<r_m\le T\) be the reset times, with \(r_0=1\).  Consider a completed phase that begins at \(r\) and resets at \(s\).  Set \(\Delta=v_s-v_r\).  At time \(s\), the proposal \(y_s\) is rejected, so
\[
        C_{s-1}+f(y_s)>s v_s.
\]
During the phase, all previous proposals were accepted, so
\[
        C_{s-1}=C_r+\sum_{t=r+1}^{s-1} f(y_t).
\]
Using \(C_r\le r v_r\), we obtain
\[
        \sum_{t=r+1}^{s} f(y_t)>s v_s-rv_r.
\]
Subtracting \((s-r)v_s\) gives
\[
        \sum_{t=r+1}^{s}\bigl(f(y_t)-v_s\bigr)>r(v_s-v_r)=r\Delta.
\]
By \Cref{lem:sublevel-chasing}, each summand is at most \(\Delta\).  The case \(\Delta=0\) is impossible in the last strict inequality.  Thus \((s-r)\Delta>r\Delta\), and hence \(s>2r\).  Therefore the reset times more than double and there are at most \(1+\log_2 T\) completed phases.

For a completed phase from \(r_k\) to \(r_{k+1}\), define \(\Delta_k:=v_{r_{k+1}}-v_{r_k}\).  For the final unfinished phase, define \(\Delta_m:=v_T-v_{r_m}\).  These increments telescope:
\[
        \sum_{k=0}^{m}\Delta_k=v_T-v_1.
\]
By \Cref{lem:sublevel-chasing}, the movement in each completed phase, including the actual reset jump, is at most \(C_{\rm sub}\rho_d\sqrt{\Delta_k/\mu}\): the chaser path includes the rejected tentative proposal, and the triangle inequality bounds the subsequent reset jump by the movement to that proposal plus its distance to the reset point.  The final phase is handled by appending \(u_T\) as a terminal singleton and then dropping the nonnegative terminal distance.  Therefore the internal movement satisfies
\[
        \M_T
        \le
        C_{\rm sub}\rho_d\sum_{k=0}^{m}\sqrt{\frac{\Delta_k}{\mu}}.
\]
Cauchy--Schwarz yields
\[
        \sum_{k=0}^{m}\sqrt{\Delta_k}
        \le
        \sqrt{(m+1)\sum_{k=0}^{m}\Delta_k}
        \le
        \sqrt{(1+\log_2 T)(v_T-v_1)}.
\]
This proves
\[
        \M_T
        \le
        C\rho_d\sqrt{\frac{(v_T-v_1)(1+\log T)}{\mu}}.
\]
If an initial point \(x_0\in X\) is prescribed, then \(\norm{x_1-x_0}\le D\), which gives the stated bound for \(\M_T(x_0)\).
Finally, since \(u_T,u_1\in X\), \(\diam(X)\le D\), and \(f\) is \(G\)-Lipschitz,
\[
        v_T-v_1=f(u_T)-f(u_1)\le G\norm{u_T-u_1}\le GD.
\]
The second displayed bound follows.
\end{proof}

\subsection{Derandomization for \CONES lower bounds}
\label{app:cones-randomization}

\begin{ncbevflem}[Mean-policy derandomization]
\label{lem:cones-derandomization}
Let \(\mathcal A\) be a randomized feasible \CONES algorithm.  For every deterministic request prefix \(S_{1:t}:=(S_1,\ldots,S_t)\), let \(X_t(S_{1:t};\omega)\) be the action of \(\mathcal A\) under random seed \(\omega\), and define the deterministic mean policy
\[
        \bar x_t(S_{1:t})
        :=
        \E_\omega[X_t(S_{1:t};\omega)].
\]
Then \(\bar x_t(S_{1:t})\in S_t\).  Moreover, on every fixed deterministic nested sequence and for every convex objective \(f\),
\[
        \sum_{t=1}^{T}f(\bar x_t)
        \le
        \E\left[\sum_{t=1}^{T}f(X_t)\right],
\]
and
\[
        \sum_{t=2}^{T}\norm{\bar x_t-\bar x_{t-1}}
        \le
        \E\left[\sum_{t=2}^{T}\norm{X_t-X_{t-1}}\right].
\]
\end{ncbevflem}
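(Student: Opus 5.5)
The plan is to prove the three claims by Jensen's inequality. The one structural point to settle first is that, for a fixed deterministic nested sequence, the action at round \(t\) depends only on the revealed prefix \(S_{1:t}\) and the seed \(\omega\). The sequence is fixed in advance and does not depend on \(\omega\), so all random actions \(X_1,\ldots,X_T\) live on a single probability space indexed by \(\omega\). In particular, \(X_t\) and \(X_{t-1}\) are coupled through the same seed. Since \(X_t(S_{1:t};\omega)\in S_t\subseteq X\) and \(X\) is compact, each \(X_t\) is bounded. Assuming measurability in \(\omega\), which is implicit in speaking of a randomized algorithm, the expectation \(\bar x_t(S_{1:t})\) is well defined.

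First, feasibility. The support \(S_t\) is a closed convex set containing \(X_t(\cdot;\omega)\) for every \(\omega\), and the mean of a bounded random vector supported in a closed convex set lies in that set. I would justify this by the separating hyperplane argument. If \(\bar x_t\notin S_t\), there is a vector \(\theta\) and a constant \(c\) with \(\langle\theta,y\rangle\le c\) for all \(y\in S_t\) but \(\langle\theta,\bar x_t\rangle>c\). Taking expectations of \(\langle\theta,X_t\rangle\le c\) gives a contradiction.

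Second, the loss inequality. A continuous convex \(f\) on the compact set \(X\) is bounded, so \(f(X_t)\) is integrable. Jensen's inequality then gives \(f(\bar x_t)=f(\E X_t)\le\E f(X_t)\) for each \(t\). Summing over \(t\) and using linearity of expectation yields the first display.

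Third, the movement inequality. By linearity, \(\bar x_t-\bar x_{t-1}=\E[X_t-X_{t-1}]\), and this uses the common seed on the fixed sequence. Since the norm is convex, Jensen gives \(\norm{\E[X_t-X_{t-1}]}\le\E\norm{X_t-X_{t-1}}\). Summing over \(t=2,\ldots,T\) completes the proof.

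The only step needing care is the coupling of \(X_t\) and \(X_{t-1}\) through one seed on a deterministic sequence that does not depend on the seed. Once that is made explicit, everything is routine. If Jensen for possibly discontinuous convex \(f\) at boundary points is a concern, I would note that the statement only invokes continuous convex \(f\), as in the \CONES model.
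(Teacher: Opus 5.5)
Your proposal is correct and follows the paper's proof: feasibility comes from convexity of \(S_t\), the loss inequality from Jensen applied to \(f\), and the movement inequality from linearity of expectation together with convexity of the norm. Your extra details only make explicit what the paper leaves implicit: the separating-hyperplane justification of \(\bar x_t\in S_t\), integrability via compactness, and the coupling of \(X_t\) and \(X_{t-1}\) through a single seed on a seed-independent sequence.
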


\begin{proof}
For a fixed deterministic prefix, \(X_t(S_{1:t};\omega)\in S_t\) almost surely.  Convexity of \(S_t\) therefore gives \(\bar x_t(S_{1:t})\in S_t\), so the mean policy is a deterministic feasible online policy defined on every prefix.  On a fixed full request sequence, Jensen's inequality gives \(f(\bar x_t)\le\E[f(X_t)]\).  The norm is convex as well, and hence
\[
        \norm{\bar x_t-\bar x_{t-1}}
        =
        \norm{\E[X_t-X_{t-1}]}
        \le
        \E\norm{X_t-X_{t-1}}.
\]
Summing proves both inequalities.
\end{proof}

\subsection{Proof of the terminal-regret lower bound}
\label{app:terminal-strongly-convex-cones-lower}

\begin{proof}[Proof of \Cref{thm:terminal-strongly-convex-cones-lower}]
Let \(\mathcal A\) be a randomized algorithm satisfying the theorem's terminal expected-regret guarantee, and form its deterministic mean policy from \Cref{lem:cones-derandomization}.  Apply the deterministic construction below to that mean policy.  This produces a deterministic nested sequence that may depend on the distributional rule of \(\mathcal A\), but not on its realized random seed.  On the resulting fixed sequence, the mean policy has terminal regret at most \(R_T\), while its movement is no larger than the expected movement of \(\mathcal A\).  It is therefore enough to establish the claimed movement lower bound for a deterministic feasible policy.

Fix such a deterministic policy.  Set
\[
        Z_T:=\frac{T}{R_T+1}
\]
and assume that \(Z_T\) is sufficiently large.  Let
\[
        K:=\left\lfloor \kappa\log Z_T\right\rfloor,
\]
where \(\kappa>0\) is a sufficiently small universal constant to be chosen later.  We may assume \(K\ge2\).  Define
\[
        h:=\frac{1}{8\sqrt K},
        \qquad
        \delta:=4h^2=\frac{1}{16K},
        \qquad
        \varepsilon:=\frac h4,
        \qquad
        \eta:=\frac{\varepsilon^2}{2}=\frac{1}{2048K}.
\]
For \(k=1,\ldots,K\), set
\[
        \sigma_k:=(-1)^k,
        \qquad
        u_k:=(\sigma_k h,1+k\delta).
\]
Since \(K\delta=1/16\), all these points lie in \(X=[-1,1]\times[1,2]\).

For each \(k\), define the halfspace
\[
        H_k:=\set{x\in\R^2:\ip{u_k}{x-u_k}\ge0}
\]
and the nested set
\[
        \mathcal S_k:=X\cap\bigcap_{j=1}^{k}H_j.
\]
We first verify that \(u_k\in\mathcal S_k\).  For \(j<k\),
\[
        \ip{u_j}{u_k-u_j}
        =h^2(\sigma_j\sigma_k-1)+(1+j\delta)(k-j)\delta.
\]
The first term is either \(0\) or \(-2h^2\), while the second is at least \(\delta=4h^2\).  Hence the inner product is at least \(2h^2>0\).  Also \(u_k\in H_k\) with equality, so \(u_k\in\mathcal S_k\).

For any \(x\in\mathcal S_k\), membership in \(H_k\) gives \(\ip{u_k}{x-u_k}\ge0\).  Since \(f(x)=\frac12\norm{x}^2\),
\[
        f(x)-f(u_k)
        =
        \ip{u_k}{x-u_k}+\frac12\norm{x-u_k}^2
        \ge
        \frac12\norm{x-u_k}^2.
\]
Thus \(u_k\) is the unique minimizer of \(f\) over \(\mathcal S_k\).  Let \(v_k=f(u_k)\).  If \(\norm{x-u_k}\ge\varepsilon\), then \(f(x)-v_k\ge\eta\).

The increase in constrained optimum satisfies, for \(k<K\),
\[
        \Delta_{k+1}:=v_{k+1}-v_k
        =
        \delta\left(1+\left(k+\frac12\right)\delta\right).
\]
Since \(K\delta=1/16\),
\[
        \delta\le \Delta_{k+1}<\frac{17}{16}\delta<5h^2.
\]
Using \(\eta=h^2/32\), this gives \(\Delta_{k+1}\le160\eta\).

The adversary reveals the sets \(\mathcal S_1,\ldots,\mathcal S_K\) in phases.  Let \(s_k\) be the first round of phase \(k\).  For each phase whose target neighborhood \(u_k+\varepsilon\B\) is reached, let \(\ell_k\) denote the number of rounds from the start of phase \(k\) through the first entry.  For \(k<K\), phase \(k+1\) begins on the following round.  After the first entry in phase \(K\), the adversary keeps \(\mathcal S_K\) fixed for all remaining rounds; those filler rounds are not included in \(\ell_K\).  Thus, whenever phases \(1,\ldots,k-1\) have reached their targets,
\[
        s_1=1,
        \qquad
        s_k=1+\sum_{i=1}^{k-1}\ell_i.
\]
At the beginning of phase \(k\), define the accumulated benchmark credit
\[
        P_k:=\sum_{i=1}^{k-1}\ell_i(v_k-v_i).
\]
This is the largest amount by which losses incurred in earlier phases can lie below the new benchmark value \(v_k\), using only the fact that an action played while the feasible set was \(\mathcal S_i\) has loss at least \(v_i\).  Set
\[
        L_k:=\left\lfloor\frac{P_k+R_T}{\eta}\right\rfloor+1.
\]
During phase \(k\), the adversary repeatedly presents \(\mathcal S_k\).  If the learner enters \(u_k+\varepsilon\B\) during the first \(L_k\) rounds, then, when \(k<K\), the next phase begins on the following round.  If the learner does not enter the target neighborhood during those \(L_k\) rounds, the adversary keeps \(\mathcal S_k\) fixed for all remaining rounds.

We now verify that every target is reached early enough for the construction to continue within the horizon.  For each \(j\) whose target has been reached, write
\[
        N_j:=\sum_{i=1}^{j}\ell_i,
        \qquad N_0:=0.
\]
At the beginning of phase \(k\), define
\[
        Q_k:=\frac{P_k}{\eta},
        \qquad
        B:=\frac{R_T}{\eta},
        \qquad
        W_k:=N_{k-1}+Q_k+B+1.
\]
We prove by induction over \(k\) that phase \(k\) begins, its target is reached by its deadline, \(\ell_k\le L_k\), and
\[
        W_k\le 324^{k-1}(B+1).
\]
For \(k=1\), phase~1 begins on round~1, and \(P_1=N_0=0\), so \(W_1=B+1\).

Assume that phase \(k\) begins and that the displayed bound on \(W_k\) holds.  Because
\[
        L_k\le Q_k+B+1\le W_k
        \qquad\text{and}\qquad
        N_{k-1}\le W_k,
\]
the phase-\(k\) deadline occurs no later than
\[
        N_{k-1}+L_k\le 2W_k\le 2\cdot 324^{k-1}(B+1).
\]
Since \(\eta^{-1}=2048K\),
\[
        B+1\le 2049K(R_T+1).
\]
Choose \(\kappa<1/(4\log 324)\).  For every \(k\le K\), the choice
\(K\le \kappa\log Z_T\) gives
\[
        324^{k-1}\le 324^K\le Z_T^{1/4}.
\]
Therefore
\[
        2\cdot324^{k-1}(B+1)
        \le
        C_0K(R_T+1)Z_T^{1/4}
        =
        C_0KTZ_T^{-3/4}.
\]
Because \(K=O(\log Z_T)\), the ratio \(C_0KZ_T^{-3/4}\) tends to zero.  Hence, for sufficiently large \(Z_T\), every phase deadline is at most \(T/2\).  In particular, reaching a target at its deadline still leaves a subsequent round on which the next phase can begin.

Suppose the learner failed to enter \(u_k+\varepsilon\B\) by this deadline.  The adversary would then freeze the feasible set at \(\mathcal S_k\), so the terminal optimum value would be \(v_k\).  Earlier phases can contribute at worst \(-P_k\) to terminal regret relative to \(v_k\), while the first \(L_k\) rounds of phase \(k\) contribute at least \(L_k\eta\) by the excess-loss bound above.  All later rounds contribute nonnegatively because the learner remains feasible in \(\mathcal S_k\).  Consequently,
\[
        \Reg_T^{\CONES}
        \ge
        -P_k+L_k\eta
        >
        R_T,
\]
where the strict final inequality follows from the definition of \(L_k\).  This contradicts the assumed terminal-regret guarantee.  Thus the learner reaches \(u_k+\varepsilon\B\) by the deadline, and \(\ell_k\le L_k\).

It follows that
\[
        N_k=N_{k-1}+\ell_k\le 2W_k.
\]
If \(k<K\), then
\[
        P_{k+1}=P_k+\Delta_{k+1}N_k.
\]
Using \(\Delta_{k+1}\le160\eta\), we obtain
\[
        Q_{k+1}\le Q_k+160N_k\le W_k+320W_k=321W_k.
\]
Since \(B+1\le W_k\),
\[
        W_{k+1}
        =N_k+Q_{k+1}+B+1
        \le
        2W_k+321W_k+W_k
        =324W_k.
\]
Moreover, the target in phase \(k\) is reached by time \(T/2\), so phase \(k+1\) begins within the horizon.  This closes the induction for \(k<K\).

For the final phase,
\[
        N_K\le 2W_K\le 2\cdot324^{K-1}(B+1)\le \frac{T}{2}.
\]
Hence all \(K\) target neighborhoods are reached within the horizon.  After the learner reaches \(u_K+\varepsilon\B\), the adversary keeps \(\mathcal S_K\) fixed for the remaining rounds.

Let \(\tau_k\) be the first round in phase \(k\) when the learner enters \(u_k+\varepsilon\B\).  Then \(\norm{x_{\tau_k}-u_k}\le\varepsilon\).  Consecutive minimizers satisfy
\[
        \norm{u_{k+1}-u_k}=\sqrt{4h^2+\delta^2}\ge2h.
\]
Thus
\[
        \norm{x_{\tau_{k+1}}-x_{\tau_k}}
        \ge
        2h-2\varepsilon
        =
        \frac32 h.
\]
The times \(\tau_1<\cdots<\tau_K\) are ordered, so the corresponding trajectory segments are disjoint.  Therefore
\[
        \M_T
        \ge
        \sum_{k=1}^{K-1}\norm{x_{\tau_{k+1}}-x_{\tau_k}}
        \ge
        \frac32(K-1)h
        \ge
        \frac{3}{32}\sqrt K
\]
for \(K\ge2\).  Since \(K=\Theta(\log Z_T)\), this gives
\[
        \M_T=\Omega\left(\sqrt{\log\left(\frac{T}{R_T+1}\right)}\right).
\]
This proves the deterministic lower bound and, by the derandomization step at the beginning, the randomized expected-movement lower bound.
\end{proof}

\subsection{Steiner-point facts and uniformly sharp objectives}\label{app:sharp-proof}
We use the standard Steiner point and its support-function representation; see, for example, \citet{schneider2014convex} and \citet{bubeck2020nested}:
\[
        s(K):=d\int_{\mathbb S^{d-1}} h_K(\theta)\theta\,d\sigma(\theta),
\]
where \(h_K(\theta)=\sup_{x\in K}\ip{\theta}{x}\) and \(\sigma\) is the normalized rotation-invariant measure on the sphere.  For every nonempty compact convex set \(K\), \(s(K)\in K\).  Moreover,
\begin{equation}\label{eq:steiner-lip}
        \norm{s(K)-s(L)}
        \le
        d\int_{\mathbb S^{d-1}}
        \abs{h_K(\theta)-h_L(\theta)}\,d\sigma(\theta).
\end{equation}

\begin{proof}[Proof of \Cref{thm:cones-sharp}]
Since \(\mathcal A_t\) is compact and convex, its Steiner point belongs to \(\mathcal A_t\).  Hence \(f(x_t)=v_t\).  Because the feasible sets are nested, \(v_t\le v_T\) for every \(t\).  Therefore
\[
        \Reg_T^{\CONES}=
        \sum_{t=1}^T(v_t-v_T)
        \le0.
\]

We now bound movement.  Fix \(t\ge2\) and let \(a\in\mathcal A_t\).  Since \(\mathcal A_t\subseteq S_t\subseteq S_{t-1}\), the sharpness condition at time \(t-1\) gives
\[
        \alpha\dist(a,\mathcal A_{t-1})
        \le
        f(a)-v_{t-1}
        =v_t-v_{t-1}.
\]
Thus
\begin{equation}\label{eq:At-inclusion}
        \mathcal A_t\subseteq\mathcal A_{t-1}+r_t\B,
        \qquad
        r_t:=\frac{v_t-v_{t-1}}{\alpha}.
\end{equation}
Consequently, for every unit vector \(\theta\),
\begin{equation}\label{eq:positive-support}
        h_{\mathcal A_t}(\theta)-h_{\mathcal A_{t-1}}(\theta)
        \le r_t.
\end{equation}
For any real sequence \((a_t)_{t=1}^T\),
\[
        \sum_{t=2}^T \abs{a_t-a_{t-1}}
        \le
        \abs{a_T-a_1}+2\sum_{t=2}^T (a_t-a_{t-1})_+.
\]
Apply this with \(a_t=h_{\mathcal A_t}(\theta)\).  Since all sets \(\mathcal A_t\) lie in \(X\), their supports in any fixed direction vary by at most \(D\), so \(\abs{a_T-a_1}\le D\).  Also \eqref{eq:positive-support} gives \((a_t-a_{t-1})_+\le r_t\).  Therefore
\[
        \sum_{t=2}^T \abs{h_{\mathcal A_t}(\theta)-h_{\mathcal A_{t-1}}(\theta)}
        \le
        D+2\sum_{t=2}^T r_t.
\]
Integrating this inequality over \(\theta\), using \eqref{eq:steiner-lip}, and exchanging the finite sum and the integral gives
\[
        \sum_{t=2}^T \norm{s(\mathcal A_t)-s(\mathcal A_{t-1})}
        \le
        dD+2d\sum_{t=2}^T r_t.
\]
Finally,
\[
        \sum_{t=2}^T r_t
        =\frac{v_T-v_1}{\alpha}
        \le \frac{GD}{\alpha},
\]
because \(f\) is \(G\)-Lipschitz and all minimizers lie in the diameter-\(D\) set \(X\).  Hence, more precisely,
\[
        \M_T
        \le
        dD+\frac{2d}{\alpha}(v_T-v_1)
        \le
        dD+\frac{2dGD}{\alpha}.
\]
If an initial point \(x_0\in X\) is prescribed, the additional first movement is at most \(D\).  This proves the theorem.
\end{proof}

\section{Deferred proofs for \COCO}\label{app:coco-proof}

\subsection{Proof of the convex \COCO theorem}\label{app:coco-convex-proof}
We prove \Cref{thm:coco-convex-main}.  Throughout this subsection, \(H_t,u_t,v_t,z_t,A_t\) are the quantities generated by \Cref{alg:coco-convex}.  Recall that
\[
        H_t(x)=\frac{\lambda}{2}\norm{x-z_0}^2+\sum_{s=1}^t f_s(x).
\]
Thus \(H_t\) is \(\lambda\)-strongly convex on \(X\), even though the losses \(f_s\) are only convex.

\begin{ncbevflem}[Regularized budget invariant]\label{lem:convex-budget}
For every \(t\ge0\), \Cref{alg:coco-convex} satisfies
\[
        A_t\le v_t.
\]
\end{ncbevflem}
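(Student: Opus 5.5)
The plan is induction on \(t\), mirroring the budget invariant in the proof of \Cref{thm:cones-sc}. The key monotonicity fact is that \(v_t\) is nondecreasing in \(t\). Indeed, \(H_t=H_{t-1}+f_t\) and \(S_t\subseteq S_{t-1}\), so
\[
        v_t=H_t(u_t)=H_{t-1}(u_t)+f_t(u_t)\ge v_{t-1}+f_t(u_t),
\]
since \(u_t\in S_t\subseteq S_{t-1}\) and \(v_{t-1}\) is the minimum of \(H_{t-1}\) over \(S_{t-1}\). This is a stronger statement than plain monotonicity, and it is exactly what the reset case will need, because the losses \(f_t\) may be negative. Unlike the \CONES case, we cannot just use \(v_t\ge v_{t-1}\); we must keep the additive \(f_t(u_t)\) term.

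For the base case \(t=0\), we have \(A_0=0\), and \(v_0=H_0(u_0)=\frac{\lambda}{2}\norm{z_0-z_0}^2=0\), since \(u_0=z_0\) minimizes the nonnegative function \(H_0\) over \(S_0=X\). Hence \(A_0\le v_0\).

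For the inductive step, assume \(A_{t-1}\le v_{t-1}\) and split on the algorithm's branch.
\begin{itemize}
\item If the proposal \(y_t\) is accepted, then \(A_t=A_{t-1}+f_t(y_t)\le v_t\) directly by the acceptance test.
\item If it is rejected, then \(z_t=u_t\), and by the induction hypothesis and the displayed inequality,
\[
        A_t=A_{t-1}+f_t(u_t)\le v_{t-1}+f_t(u_t)\le v_t.
\]
\end{itemize}

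The only subtle point, and the step I expect to be the main obstacle, is the reset case: \(f_t(u_t)\) may be negative, so the argument must use the refined inequality \(v_t\ge v_{t-1}+f_t(u_t)\) rather than monotonicity of \(v_t\). This inequality holds because the new minimizer \(u_t\) is feasible for the previous regularized problem. Everything else is bookkeeping, and the argument needs the existence of \(u_t\), which holds by compactness and continuity, and, for definiteness, its uniqueness, which holds by \(\lambda\)-strong convexity of \(H_t\).
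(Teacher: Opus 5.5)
Your proof is correct and is essentially the paper's argument: induction from \(A_0=v_0=0\), with the accepted case immediate from the acceptance test and the reset case using \(u_t\in S_t\subseteq S_{t-1}\) to get \(A_t\le v_{t-1}+f_t(u_t)\le H_{t-1}(u_t)+f_t(u_t)=v_t\). One wording fix: your opening claim that \(v_t\) is nondecreasing is false in general here, because losses may be negative and \(v_t\) can decrease, and \(v_t\ge v_{t-1}+f_t(u_t)\) is not a strengthening of monotonicity; however, your argument never actually uses monotonicity, only that correct inequality, so nothing breaks.
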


\begin{proof}
At time zero, \(A_0=0\).  Since \(H_0(x)=\frac{\lambda}{2}\norm{x-z_0}^2\) and \(z_0\in S_0=X\), we have \(u_0=z_0\) and \(v_0=H_0(z_0)=0\).  Hence \(A_0=v_0\).

Assume the claim holds at time \(t-1\).  If the chaser proposal is accepted, then the acceptance rule gives
\[
        A_t=A_{t-1}+f_t(y_t)\le v_t.
\]
If the proposal is rejected, then \(z_t=u_t\).  Since \(u_t\in S_t\subseteq S_{t-1}\), the definition of \(v_{t-1}\) gives
\[
        v_{t-1}=\min_{x\in S_{t-1}}H_{t-1}(x)\le H_{t-1}(u_t).
\]
Using the induction hypothesis,
\[
\begin{aligned}
        A_t
        &=A_{t-1}+f_t(u_t)\\
        &\le v_{t-1}+f_t(u_t)\\
        &\le H_{t-1}(u_t)+f_t(u_t)
        =H_t(u_t)=v_t.
\end{aligned}
\]
The induction is complete.
\end{proof}

We now control movement.  Let
\[
        0=r_0<r_1<\cdots<r_m\le T
\]
be the reset times of \Cref{alg:coco-convex}; that is, \(r_k\), \(k\ge1\), are the rounds at which a tentative chaser proposal is rejected.  Put \(a_0=z_0=u_0\).  For \(k\ge1\), set \(a_k=z_{r_k}=u_{r_k}\), and define
\[
        \delta_k:=\norm{a_{k+1}-a_k},
        \qquad k=0,\ldots,m-1.
\]

\begin{ncbevflem}[Reset displacement for convex losses]\label{lem:convex-reset}
For every completed phase \(k=0,\ldots,m-1\),
\[
        \delta_k
        \le
        \frac{2G_f\rho_d(r_{k+1}-r_k)}{\lambda}.
\]
Consequently,
\[
        \sum_{k=0}^{m-1}\delta_k
        \le
        \frac{2G_f\rho_dT}{\lambda}.
\]
\end{ncbevflem}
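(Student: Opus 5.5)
The plan is to fix a completed phase \(k\), write \(r:=r_k\) and \(s:=r_{k+1}\), and compare two bounds. The rejection at round \(s\) gives a lower bound on the chaser's excess loss over the phase. The singleton augmentation of \Cref{lem:endpoint} gives an upper bound on the same quantity that is linear in \(\delta_k\). The curvature of the regularizer will supply a term quadratic in \(\delta_k\), and comparing the two yields the claimed bound.

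\emph{Step 1 (loss inequality from the reset).} In the phase, the chaser starts at \(a_k=u_r\) (for \(k=0\), \(a_0=z_0=u_0\)). It receives the nested requests \(S_{r+1}\supseteq\cdots\supseteq S_s\) and outputs \(y_{r+1},\ldots,y_s\). Here \(y_s\) is the rejected tentative proposal, which is well defined because the chaser is deterministic. All proposals before \(s\) were accepted, so
\[
A_{s-1}=A_r+\sum_{t=r+1}^{s-1}f_t(y_t).
\]
Rejection means \(A_{s-1}+f_s(y_s)>v_s\). \Cref{lem:convex-budget} gives \(A_r\le v_r\). Combining these,
\[
\sum_{t=r+1}^{s}f_t(y_t)>v_s-v_r.
\]

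\emph{Step 2 (curvature lower bound on \(v_s-v_r\)).} The point \(u_r\) minimizes the \(\lambda\)-strongly convex function \(H_r\) over \(S_r\), and \(u_s\in S_s\subseteq S_r\). The constrained-optimality argument used in the proof of \Cref{lem:sublevel-chasing} then gives
\[
H_r(u_s)-v_r\ge \tfrac{\lambda}{2}\norm{u_s-u_r}^2=\tfrac{\lambda}{2}\delta_k^2.
\]
Since \(v_s=H_s(u_s)=H_r(u_s)+\sum_{t=r+1}^s f_t(u_s)\), substituting into Step 1 yields
\[
\sum_{t=r+1}^{s}\bigl(f_t(y_t)-f_t(u_s)\bigr)>\tfrac{\lambda}{2}\delta_k^2 .
\]

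\emph{Step 3 (endpoint control).} Take \(b=u_s\in S_s\), which is the last request of the phase. The second claim of \Cref{lem:endpoint} gives \(\norm{y_t-u_s}\le\rho_d\norm{a_k-u_s}=\rho_d\delta_k\) for every \(t\in\{r+1,\ldots,s\}\). By \(G_f\)-Lipschitzness, the left side of Step 2 is at most \((s-r)G_f\rho_d\delta_k\). The strict inequality forces \(\delta_k>0\), so dividing by \(\delta_k\) gives
\[
\delta_k<\frac{2G_f\rho_d(s-r)}{\lambda}.
\]
Summing over completed phases, the lengths \(r_{k+1}-r_k\) telescope to at most \(T\), which gives the consequence.

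The main point needing care is Step 2. The strong-convexity gap must be taken with respect to the earlier leader \(u_r\) over \(S_r\), not \(u_s\) over \(S_s\), because \(a_k\) need not lie in \(S_s\). One also has to check that including the rejected proposal \(y_s\) as a chaser output is legitimate for \Cref{lem:endpoint}. This is where determinism and online consistency of the chaser are used.
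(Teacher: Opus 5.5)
Your proof is correct and follows the paper's argument essentially step for step. The ingredients match: rejection combined with the budget invariant \(A_r\le v_r\), the \(\lambda\)-strong-convexity gap of \(H_r\) at the earlier leader \(u_r\) evaluated at \(u_s\in S_r\), and singleton augmentation with \(b=u_s\) together with \(G_f\)-Lipschitzness. The only cosmetic differences are the order of two steps and the treatment of \(\delta_k=0\). You apply the budget invariant before splitting \(v_s=H_r(u_s)+\sum_{t=r+1}^s f_t(u_s)\), whereas the paper splits first. You also deduce \(\delta_k>0\) from the strict inequality, while the paper treats \(\delta_k=0\) as a trivial case.
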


\begin{proof}
Fix a completed phase and abbreviate \(r=r_k\), \(s=r_{k+1}\), \(a=a_k\), \(b=a_{k+1}=u_s\), and \(\delta=\norm{a-b}\).  The chaser is started at \(a\) at the beginning of the phase.  Let \(y_t\) be the tentative chaser output on request \(S_t\) for \(t=r+1,\ldots,s\).  The tentative output \(y_s\) is rejected.

The rejection condition gives
\[
        A_{s-1}+f_s(y_s)>v_s.
\]
For \(t=r+1,\ldots,s-1\), the tentative proposals were accepted, and hence \(z_t=y_t\).  Therefore
\[
        A_{s-1}=A_r+\sum_{t=r+1}^{s-1} f_t(y_t).
\]
Combining the previous two displays,
\[
        \sum_{t=r+1}^s f_t(y_t)>v_s-A_r.
\]
Since \(v_s=H_s(b)=H_r(b)+\sum_{t=r+1}^s f_t(b)\), this becomes
\begin{equation}\label{eq:convex-reset-gap}
        \sum_{t=r+1}^s\bigl(f_t(y_t)-f_t(b)\bigr)
        > H_r(b)-A_r.
\end{equation}
By \Cref{lem:convex-budget}, \(A_r\le v_r\).  Also \(a=u_r\), so \(v_r=H_r(a)\).  Thus
\[
        H_r(b)-A_r
        =H_r(b)-H_r(a)+v_r-A_r
        \ge H_r(b)-H_r(a).
\]
The point \(b\) belongs to \(S_s\subseteq S_r\), and \(a\) minimizes \(H_r\) over \(S_r\).  Since \(H_r\) is \(\lambda\)-strongly convex, the standard constrained-minimizer inequality gives
\[
        H_r(b)-H_r(a)
        \ge
        \frac{\lambda}{2}\norm{b-a}^2
        =
        \frac{\lambda}{2}\delta^2.
\]
Therefore the right-hand side of \eqref{eq:convex-reset-gap} is at least \(\frac{\lambda}{2}\delta^2\).

On the other hand, the singleton-augmentation lemma applied to the phase and to the terminal singleton \(\{b\}\) gives
\[
        \norm{y_t-b}\le \rho_d\delta,
        \qquad t=r+1,\ldots,s.
\]
By \(G_f\)-Lipschitzness,
\[
        f_t(y_t)-f_t(b)
        \le
        G_f\norm{y_t-b}
        \le
        G_f\rho_d\delta.
\]
Summing over \(s-r\) terms and comparing with \eqref{eq:convex-reset-gap}, we get
\[
        (s-r)G_f\rho_d\delta>\frac{\lambda}{2}\delta^2.
\]
If \(\delta=0\), the desired inequality is trivial.  Otherwise divide by \(\delta\) to obtain
\[
        \delta<\frac{2G_f\rho_d(s-r)}{\lambda}.
\]
Summing over completed phases and using disjointness of the phase intervals gives
\[
        \sum_{k=0}^{m-1}\delta_k
        \le
        \frac{2G_f\rho_d}{\lambda}
        \sum_{k=0}^{m-1}(r_{k+1}-r_k)
        \le
        \frac{2G_f\rho_dT}{\lambda}.
\]
\end{proof}

\begin{ncbevflem}[Movement bound for \Cref{alg:coco-convex}]\label{lem:convex-movement}
The post-update movement satisfies
\[
        P_T^z
        :=\sum_{t=1}^T\norm{z_t-z_{t-1}}
        \le
        \rho_dD+\frac{2G_f\rho_d^2T}{\lambda}.
\]
\end{ncbevflem}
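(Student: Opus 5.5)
The plan is to split the post-update trajectory \(z_0,z_1,\ldots,z_T\) into the phases delimited by the reset times \(0=r_0<r_1<\cdots<r_m\le T\). Each phase will be charged to a single application of \Cref{lem:endpoint}. Within phase \(k\), running from \(r_k\) to \(r_{k+1}\), a fresh chaser starts at \(a_k=z_{r_k}\). The accepted points \(z_t=y_t\) for \(r_k<t<r_{k+1}\) are exactly the chaser outputs, and the phase ends with the reset jump \(z_{r_{k+1}}=a_{k+1}=u_{r_{k+1}}\). The last, possibly unfinished, phase runs from \(r_m\) to \(T\) with no reset.

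\textbf{Completed phases.} For a completed phase \(k\le m-1\), the movement inside the phase, including the reset jump, is
\[
\sum_{t=r_k+1}^{r_{k+1}-1}\norm{y_t-y_{t-1}}+\norm{y_{r_{k+1}-1}-a_{k+1}}.
\]
This jump goes from the last accepted output directly to \(a_{k+1}\), with \(y_{r_k}=a_k\). By the triangle inequality, this is at most the chaser path through the rejected tentative output \(y_{r_{k+1}}\), plus \(\norm{y_{r_{k+1}}-a_{k+1}}\).

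The requests \(S_{r_k+1}\supseteq\cdots\supseteq S_{r_{k+1}}\) are nested. Moreover, \(a_{k+1}=u_{r_{k+1}}\in S_{r_{k+1}}\). Since the chaser is deterministic and online, the tentative evaluation at \(r_{k+1}\) is the output it would produce on that request. Hence \Cref{lem:endpoint}, applied with \(b=a_{k+1}\), bounds the phase movement by \(\rho_d\norm{a_k-a_{k+1}}=\rho_d\delta_k\). Summing these bounds and invoking \Cref{lem:convex-reset} gives a total of at most \(2G_f\rho_d^2T/\lambda\) over all completed phases.

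\textbf{Final phase.} For the final phase, take \(b=u_T\in S_T\) and append the singleton \(\{u_T\}\) to the chaser's request sequence. \Cref{lem:endpoint} then bounds the movement from \(r_m\) to \(T\) by \(\rho_d\norm{a_m-u_T}\). Both points lie in \(X\), so this is at most \(\rho_dD\). If there are no accepted rounds after \(r_m\) (for instance when \(r_m=T\)), this phase contributes nothing.

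Adding the completed-phase bound and the final-phase bound yields the claim.

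\textbf{Main obstacle.} The step needing care is matching the algorithm's actual trajectory to the chaser's path at a reset, since the rejected output is never played. The triangle-inequality detour through \(y_{r_{k+1}}\) handles this. One must also ensure that the requests fed to each fresh chaser are exactly \(S_t\) from round \(r_k+1\) on, which holds by construction of \Cref{alg:coco-convex}.
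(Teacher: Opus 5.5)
Your proposal is correct and follows the paper's proof essentially step for step: each completed phase is charged \(\rho_d\delta_k\) by singleton augmentation at the reset point, using the triangle-inequality detour through the rejected tentative output, and these charges are summed via \Cref{lem:convex-reset}, while the unfinished final phase is charged \(\rho_d D\). The only difference is cosmetic: you append \(\{u_T\}\) and invoke \Cref{lem:endpoint}, whereas the paper compares directly with the offline path to an arbitrary \(x^\star\in S_T\); the two give the same bound.
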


\begin{proof}
Consider a completed phase from \(r_k+1\) through \(r_{k+1}\), with start \(a_k\) and reset endpoint \(a_{k+1}\).  The actual movement during this phase is the movement through the accepted chaser outputs, together with the final reset jump.  If \(y_t\) denotes the tentative chaser output, then this actual movement is at most
\[
        \sum_{t=r_k+1}^{r_{k+1}}\norm{y_t-y_{t-1}}
        +\norm{y_{r_{k+1}}-a_{k+1}},
        \qquad y_{r_k}=a_k.
\]
The singleton-augmentation lemma bounds this by \(\rho_d\delta_k\).  Summing over completed phases gives a contribution at most \(\rho_d\sum_k\delta_k\).

There may be an unfinished final phase after the last reset.  Choose any point \(x^\star\in S_T\), which exists by common feasibility.  Since \(x^\star\in S_t\) for every request in the final phase, an offline path can move directly from the phase start to \(x^\star\) and then remain there.  This offline path has cost at most \(D\).  The chaser's movement in the final phase is therefore at most \(\rho_dD\).  Combining these estimates with \Cref{lem:convex-reset},
\[
        P_T^z
        \le
        \rho_dD+
        \rho_d\sum_{k=0}^{m-1}\delta_k
        \le
        \rho_dD+\frac{2G_f\rho_d^2T}{\lambda}.
\]
\end{proof}

\begin{proof}[Proof of \Cref{thm:coco-convex-main}]
The movement bound is exactly \Cref{lem:convex-movement}.  We next prove regret and violation.

Fix any \(x^\star\in S_T\).  By \Cref{lem:convex-budget},
\[
        A_T\le v_T=\min_{x\in S_T}H_T(x)\le H_T(x^\star).
\]
Therefore
\[
\begin{aligned}
        A_T-\sum_{t=1}^T f_t(x^\star)
        &\le
        H_T(x^\star)-\sum_{t=1}^T f_t(x^\star)\\
        &=
        \frac{\lambda}{2}\norm{x^\star-z_0}^2
        \le
        \frac{\lambda D^2}{2}.
\end{aligned}
\]
Since the algorithm plays \(x_t=z_{t-1}\),
\[
\begin{aligned}
        \sum_{t=1}^T f_t(x_t)-\sum_{t=1}^T f_t(x^\star)
        &=
        \sum_{t=1}^T f_t(z_{t-1})-\sum_{t=1}^T f_t(x^\star)\\
        &=
        \left(A_T-\sum_{t=1}^T f_t(x^\star)\right)
        +\sum_{t=1}^T\bigl(f_t(z_{t-1})-f_t(z_t)\bigr)\\
        &\le
        \frac{\lambda D^2}{2}
        +G_f\sum_{t=1}^T\norm{z_t-z_{t-1}}.
\end{aligned}
\]
Taking the minimum over \(x^\star\in S_T\) gives
\[
        \Reg_T\le \frac{\lambda D^2}{2}+G_fP_T^z.
\]

For violation, note that \(z_t\in S_t\), so \(g_t(z_t)\le0\).  Hence
\[
        [g_t(z_{t-1})]_+
        \le
        [g_t(z_{t-1})-g_t(z_t)]_+
        \le
        G_g\norm{z_t-z_{t-1}}.
\]
Summing over \(t\) yields \(\CCV_T\le G_gP_T^z\).

Substituting \(P_T^z\le \rho_dD+2G_f\rho_d^2T/\lambda\) proves the first part.  Direct substitution of the two stated choices of \(\lambda\) gives the corresponding corollaries; the simplified regret bound for the second choice uses \(\rho_d\ge1\) and \(T\ge1\).  Finally, \Cref{thm:known-chaser} gives \(\rho_d=O(\sqrt{d\log(1+d)})\).
\end{proof}

\subsection{Strong convexity of the constrained cumulative minimizer}\label{app:strong-min-gap}
\begin{proof}[Proof of \eqref{eq:strong-min-gap}]
Let \(u_t\) minimize \(F_t\) over \(S_t\), and fix \(y\in S_t\).  For \(\lambda\in(0,1)\), the point
\[
        x_\lambda=(1-\lambda)u_t+\lambda y
\]
belongs to \(S_t\) by convexity.  By optimality of \(u_t\),
\[
        F_t(u_t)\le F_t(x_\lambda).
\]
Since \(F_t\) is \(\mu t\)-strongly convex,
\[
        F_t(x_\lambda)
        \le
        (1-\lambda)F_t(u_t)+\lambda F_t(y)
        -\frac{\mu t}{2}\lambda(1-\lambda)\norm{y-u_t}^2.
\]
Combining the two inequalities and subtracting \((1-\lambda)F_t(u_t)\) gives
\[
        \lambda F_t(u_t)
        \le
        \lambda F_t(y)-\frac{\mu t}{2}\lambda(1-\lambda)\norm{y-u_t}^2.
\]
Divide by \(\lambda>0\), then let \(\lambda\downarrow0\).  This yields
\[
        F_t(y)-F_t(u_t)\ge \frac{\mu t}{2}\norm{y-u_t}^2.
\]
\end{proof}

\subsection{Cumulative-loss invariant}
\begin{ncbevflem}\label{lem:coco-invariant}
For every \(t\ge1\), \Cref{alg:coco} satisfies
\[
        A_t\le v_t.
\]
Consequently \(B_t:=v_t-A_t\ge0\).
\end{ncbevflem}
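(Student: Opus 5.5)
The proof is an induction on \(t\), mirroring \Cref{lem:convex-budget} but without a regularizer.

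\emph{Base case.} \Cref{alg:coco} sets \(z_1=u_1\) and \(A_1=v_1\) explicitly, so \(A_1=v_1\) and \(B_1=0\).

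\emph{Inductive step.} Assume \(A_{t-1}\le v_{t-1}\) for some \(t\ge2\). There are two branches.

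If the tentative proposal \(\widehat z_t\) is accepted, then \(z_t=\widehat z_t\). The acceptance test gives directly
\[
A_t=A_{t-1}+f_t(\widehat z_t)\le v_t.
\]

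If the proposal is rejected, then \(z_t=u_t\). Since \(u_t\in S_t\subseteq S_{t-1}\) and \(u_{t-1}\) minimizes \(F_{t-1}\) over \(S_{t-1}\), we have \(v_{t-1}=\min_{x\in S_{t-1}}F_{t-1}(x)\le F_{t-1}(u_t)\). Combining this with the induction hypothesis,
\[
A_t=A_{t-1}+f_t(u_t)\le v_{t-1}+f_t(u_t)\le F_{t-1}(u_t)+f_t(u_t)=F_t(u_t)=v_t.
\]

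In both branches \(A_t\le v_t\), which closes the induction. The consequence \(B_t=v_t-A_t\ge0\) is immediate.

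The only point needing care is the reset branch. Here the comparison must go through \(F_{t-1}(u_t)\) and use that the feasible sets are nested. This replaces the monotonicity of \(v_t\) used in the \CONES argument, which is unavailable because the losses change over time.
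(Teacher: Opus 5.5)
Your proof is correct and follows the paper's argument essentially step for step. You obtain the base case from the initialization \(z_1=u_1\), the accepted branch directly from the acceptance test, and the reset branch via \(v_{t-1}\le F_{t-1}(u_t)\) using \(u_t\in S_t\subseteq S_{t-1}\) and \(F_t=F_{t-1}+f_t\).
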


\begin{proof}
At \(t=1\), the initialization gives \(z_1=u_1\), hence
\[
        A_1=f_1(z_1)=f_1(u_1)=v_1.
\]
Assume now that \(A_{t-1}\le v_{t-1}\).  If the chaser proposal is accepted, then by the acceptance rule
\[
        A_t=A_{t-1}+f_t(\widehat z_t)\le v_t.
\]
If the proposal is rejected, then \(z_t=u_t\).  Since \(u_t\in S_t\subseteq S_{t-1}\), the definition of \(v_{t-1}\) gives
\[
        v_{t-1}=\min_{x\in S_{t-1}}F_{t-1}(x)\le F_{t-1}(u_t).
\]
Therefore
\[
        A_t=A_{t-1}+f_t(u_t)
        \le v_{t-1}+f_t(u_t)
        \le F_{t-1}(u_t)+f_t(u_t)
        =F_t(u_t)=v_t.
\]
The induction is complete.
\end{proof}

\subsection{Regret and violation reduce to movement}
\begin{ncbevflem}\label{lem:movement-reduction}
For every \(T\),
\[
        \Reg_T\le G_fP_T^z-B_T,
        \qquad
        \CCV_T\le G_gP_T^z.
\]
\end{ncbevflem}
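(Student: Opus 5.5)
The argument mirrors the regret and violation part of the proof of \Cref{thm:coco-convex-main}, with the regularizer removed and the slack \(B_T\) kept rather than discarded.

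\emph{Violation.} For every \(t\ge1\) the post-update point satisfies \(z_t\in S_t\). This holds for \(t=1\) because \(z_1=u_1\in S_1\). For \(t\ge2\) it holds because both the accepted chaser output \(\widehat z_t\) and the reset point \(u_t\) lie in \(S_t\). Hence \(g_t(z_t)\le0\). Since \(x_t=z_{t-1}\),
\[
[g_t(x_t)]_+\le[g_t(z_{t-1})-g_t(z_t)]_+\le G_g\norm{z_t-z_{t-1}}.
\]
Summing over \(t=1,\ldots,T\) gives \(\CCV_T\le G_gP_T^z\). The round \(t=1\) is included, since \(x_1=z_0\) and the term \(\norm{z_1-z_0}\) appears in \(P_T^z\).

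\emph{Regret.} First split the played loss as
\[
\sum_{t=1}^T f_t(x_t)=\sum_{t=1}^T f_t(z_t)+\sum_{t=1}^T\bigl(f_t(z_{t-1})-f_t(z_t)\bigr)\le A_T+G_fP_T^z,
\]
where the inequality is \(G_f\)-Lipschitzness applied termwise. Next, by the definitions of \(u_T\) and \(v_T\),
\[
\min_{x\in S_T}\sum_{t=1}^T f_t(x)=F_T(u_T)=v_T.
\]
Subtracting this comparator gives
\[
\Reg_T\le A_T-v_T+G_fP_T^z=G_fP_T^z-B_T.
\]
\Cref{lem:coco-invariant} guarantees \(B_T\ge0\), so the slack only strengthens the bound, and in particular \(\Reg_T\le G_fP_T^z\).

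No step here is a real obstacle. The only points needing care are confirming that the benchmark equals exactly \(v_T\), with no regularizer in this algorithm, and checking that the first round is covered by the movement term with \(z_0\) as the starting point.
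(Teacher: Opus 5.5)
Your proposal is correct and follows essentially the same route as the paper: you split the played loss into $A_T$ plus the one-step-delay differences $f_t(z_{t-1})-f_t(z_t)$, identify the comparator as $F_T(u_T)=v_T$ so that the slack $B_T$ appears exactly, and bound the violation using $g_t(z_t)\le0$ together with Lipschitzness. Your checks that $z_1=u_1\in S_1$ and that the first-round term $\norm{z_1-z_0}$ is counted in $P_T^z$ are correct; they make explicit what the paper leaves implicit.
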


\begin{proof}
The terminal comparator is \(u_T\), so
\[
\begin{aligned}
        \Reg_T
        &=\sum_{t=1}^T f_t(z_{t-1})-F_T(u_T)\\
        &=\sum_{t=1}^T\bigl(f_t(z_{t-1})-f_t(z_t)\bigr)
          +\sum_{t=1}^T f_t(z_t)-v_T\\
        &=\sum_{t=1}^T\bigl(f_t(z_{t-1})-f_t(z_t)\bigr)-B_T.
\end{aligned}
\]
By Lipschitzness,
\[
        f_t(z_{t-1})-f_t(z_t)
        \le G_f\norm{z_t-z_{t-1}}.
\]
Summing proves the regret inequality.

For violation, \(z_t\in S_t\), so \(g_t(z_t)\le0\).  Therefore
\[
        [g_t(z_{t-1})]_+
        \le [g_t(z_{t-1})-g_t(z_t)]_+
        \le G_g\norm{z_t-z_{t-1}}.
\]
Summing over \(t\) proves \(\CCV_T\le G_gP_T^z\).
\end{proof}

\subsection{Reset inequality and movement bound}
Let
\[
        1=r_0<r_1<\cdots<r_m\le T
\]
be the reset times of \Cref{alg:coco}: \(r_0=1\), and \(r_k\), \(k\ge1\), are the rounds at which a tentative chaser proposal is rejected.  Put
\[
        a_k:=z_{r_k}=u_{r_k},
        \qquad
        \delta_k:=\norm{a_{k+1}-a_k},
        \qquad k=0,\ldots,m-1.
\]

\begin{ncbevflem}[Reset inequality]\label{lem:reset-ineq}
For every completed phase \(k=0,\ldots,m-1\),
\[
        (r_{k+1}-r_k)G_f\rho_d\delta_k
        >
        \frac{\mu r_k}{2}\delta_k^2+B_{r_k}.
\]
\end{ncbevflem}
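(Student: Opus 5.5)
The argument mirrors \Cref{lem:convex-reset}, with strong convexity of \(F_{r}\) replacing the quadratic regularizer and the slack \(B_{r_k}\) retained instead of discarded. Fix a completed phase and abbreviate \(r=r_k\), \(s=r_{k+1}\), \(a=a_k=u_r\), \(b=a_{k+1}=u_s\), and \(\delta=\delta_k\). The chaser is started at \(a\) on round \(r\). Let \(\widehat z_t\) denote its tentative output on request \(S_t\) for \(t=r+1,\ldots,s\). The outputs for \(t<s\) are accepted, so \(z_t=\widehat z_t\) there, while \(\widehat z_s\) is rejected.

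\emph{Step 1: lower bound from rejection.} Rejection gives \(A_{s-1}+f_s(\widehat z_s)>v_s\). Since \(A_{s-1}=A_r+\sum_{t=r+1}^{s-1}f_t(\widehat z_t)\), this becomes
\[
\sum_{t=r+1}^{s}f_t(\widehat z_t)>v_s-A_r .
\]
Write \(v_s=F_s(b)=F_r(b)+\sum_{t=r+1}^s f_t(b)\) and \(A_r=v_r-B_r\) with \(v_r=F_r(a)\). Then
\[
\sum_{t=r+1}^{s}\bigl(f_t(\widehat z_t)-f_t(b)\bigr)>F_r(b)-F_r(a)+B_r .
\]
Since \(b\in S_s\subseteq S_r\), inequality \eqref{eq:strong-min-gap} at time \(r\) gives \(F_r(b)-F_r(a)\ge \frac{\mu r}{2}\delta^2\). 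So the right-hand side is at least \(\frac{\mu r}{2}\delta^2+B_r\). Here \(B_r\ge0\) by \Cref{lem:coco-invariant}, but we keep it.

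\emph{Step 2: upper bound from chasing.} Since \(b\in S_s\subseteq S_t\) for every request in the phase, \Cref{lem:endpoint} applies with terminal singleton \(\{b\}\), for the phase up to round \(s\) including the tentative request \(S_s\). It gives \(\norm{\widehat z_t-b}\le\rho_d\norm{a-b}=\rho_d\delta\) for every \(t=r+1,\ldots,s\). Lipschitzness then yields \(f_t(\widehat z_t)-f_t(b)\le G_f\rho_d\delta\). Summing the \(s-r\) terms and combining with Step 1 gives the claimed strict inequality
\[
(s-r)G_f\rho_d\delta>\frac{\mu r}{2}\delta^2+B_r .
\]

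\emph{Main obstacle.} The only delicate point is the first phase and the bookkeeping at the reset round. At \(r_0=1\), the phase starts at \(z_1=u_1\) with \(A_1=v_1\), so \(B_1=0\) and the argument applies verbatim with \(r=1\). One must also check that \Cref{lem:endpoint} is applied to the chaser history containing \(S_s\), even though that update is later discarded. This is legitimate because the chaser is deterministic and online, so its output on \(S_s\) does not depend on whether the update is subsequently kept.
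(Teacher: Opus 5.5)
Your proposal is correct and matches the paper's proof step for step. It uses the rejection inequality together with \(A_{s-1}=A_r+\sum_{t=r+1}^{s-1}f_t(\widehat z_t)\), the decomposition \(v_s=F_r(b)+\sum_{t=r+1}^{s}f_t(b)\), and the bound \(F_r(b)-F_r(a)\ge\frac{\mu r}{2}\delta^2\) from \eqref{eq:strong-min-gap} with the slack \(B_r\) kept, then closes with \Cref{lem:endpoint} and \(G_f\)-Lipschitzness to bound each summand by \(G_f\rho_d\delta\). Your remark that the discarded tentative output on \(S_s\) is still covered by the endpoint lemma, because the chaser is deterministic and online, is the same justification the paper uses implicitly.
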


\begin{proof}
Fix a completed phase and abbreviate \(r=r_k\), \(s=r_{k+1}\), \(a=a_k\), \(b=a_{k+1}=u_s\), and \(\delta=\norm{a-b}\).  During the phase, the chaser is started at \(a\).  Let \(y_t\) denote its tentative output on request \(S_t\) for \(t=r+1,
\ldots,s\); the output \(y_s\) is the rejected tentative proposal.

At time \(s\), rejection means
\[
        A_{s-1}+f_s(y_s)>v_s.
\]
For the accepted times \(r+1,
\ldots,s-1\), we have \(z_t=y_t\).  Hence
\[
        A_{s-1}=A_r+\sum_{t=r+1}^{s-1} f_t(y_t).
\]
Combining the last two displays gives
\[
        \sum_{t=r+1}^s f_t(y_t)>v_s-A_r.
\]
Since \(v_s=F_s(b)=F_r(b)+\sum_{t=r+1}^s f_t(b)\), we obtain
\begin{equation}\label{eq:reset-sum-gap}
        \sum_{t=r+1}^s\bigl(f_t(y_t)-f_t(b)\bigr)
        >
        F_r(b)-A_r.
\end{equation}
Write \(B_r=v_r-A_r\).  Since \(a=u_r\), \(v_r=F_r(a)\).  Therefore
\[
        F_r(b)-A_r
        =F_r(b)-F_r(a)+B_r.
\]
Because \(b\in S_s\subseteq S_r\), \eqref{eq:strong-min-gap} at time \(r\) gives
\[
        F_r(b)-F_r(a)
        \ge
        \frac{\mu r}{2}\norm{b-a}^2
        =
        \frac{\mu r}{2}\delta^2.
\]
Thus the right side of \eqref{eq:reset-sum-gap} is at least \(\frac{\mu r}{2}\delta^2+B_r\).

On the other hand, by the endpoint lemma, \(\norm{y_t-b}\le\rho_d\delta\) for each \(t=r+1,
\ldots,s\).  Since each \(f_t\) is \(G_f\)-Lipschitz,
\[
        f_t(y_t)-f_t(b)\le G_f\rho_d\delta.
\]
Summing this upper bound over \(s-r\) terms and comparing with \eqref{eq:reset-sum-gap} proves the claim.
\end{proof}

\begin{ncbevflem}\label{lem:delta-sum}
The reset displacements satisfy
\[
        \sum_{k=0}^{m-1}\delta_k
        \le
        \left(D+\frac{2G_f\rho_d}{\mu}\right)\log(eT).
\]
\end{ncbevflem}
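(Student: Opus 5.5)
The plan is to turn \Cref{lem:reset-ineq} into a per-phase bound of the form \(\delta_k\le\min\{D,\,c\,x_k\}\), where
\[
c:=\frac{2G_f\rho_d}{\mu},\qquad x_k:=\frac{r_{k+1}-r_k}{r_k},
\]
and then sum these bounds using \(\sum_k\log(1+x_k)=\log(r_m/r_0)\le\log T\).

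\textbf{Step 1: the per-phase bound.} In \Cref{lem:reset-ineq}, drop the slack term, which is allowed because \(B_{r_k}\ge0\) by \Cref{lem:coco-invariant}. If \(\delta_k>0\), dividing by \(\delta_k\) gives
\[
\frac{\mu r_k}{2}\delta_k<(r_{k+1}-r_k)G_f\rho_d,
\qquad\text{so}\qquad
\delta_k<c\,x_k .
\]
If \(\delta_k=0\) there is nothing to prove. Independently, \(a_k,a_{k+1}\in X\) gives \(\delta_k\le D\). Hence \(\delta_k\le\min\{D,\,c\,x_k\}\) for every completed phase.

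\textbf{Step 2: the first phase.} For \(k=0\) we have \(r_0=1\), so \(x_0=r_1-1\) can be as large as order \(T\). Here I use only \(\delta_0\le D\). This accounts for the ``\(+1\)'' inside \(\log(eT)=1+\log T\), at least in its \(D\) part.

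\textbf{Step 3: the remaining phases.} For \(k\ge1\) the ratios telescope:
\[
\prod_{k\ge1}(1+x_k)=\prod_{k\ge1}\frac{r_{k+1}}{r_k}=\frac{r_m}{r_1}\le T,
\]
so \(\sum_{k\ge1}\log(1+x_k)\le\log T\). It therefore suffices to prove the scalar inequality
\[
\min\{D,\,c\,x\}\le (D+c)\log(1+x)+\text{(a small additive term)}\qquad\text{for all }x>0.
\]
I split by the size of \(x\):
\begin{itemize}
\item For \(x\ge e-1\), \(\log(1+x)\ge1\), so \(\min\{D,cx\}\le D\le (D+c)\log(1+x)\).
\item For \(1\le x<e-1\), \(\min\{D,cx\}\le \tfrac12(D+c)\le (D+c)\log 2\le (D+c)\log(1+x)\).
\item For \(x<1\), I use concavity of the logarithm, \(\log(1+x)\ge x\log 2\), which gives \(c\,x\le (c/\log2)\log(1+x)\).
\end{itemize}
Summing these cases then yields \(\sum_k\delta_k\le D+\max\{D,\,c/\log 2\}\log T\).

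\textbf{Main obstacle: the constant in the small-\(x\) regime.} When \(x<1\) and \(D\) is small relative to \(c\), the bound \(\min\{D,cx\}\le(D+c)\log(1+x)\) cannot hold with constant exactly \(1\) for all small \(x\), because \(\log(1+x)\approx x\) loses only a second-order term. To recover the stated constant I would first try to keep the slack \(B_{r_k}\) rather than discard it. The loss-excess accumulated by accepted chaser points in phase \(k-1\) should feed \(B_{r_k}\), and this may pay for the \(x^2\) deficit.

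Failing that, I would charge each round \(t\in(r_k,r_{k+1}]\) the amount \(c/r_k\). I would then compare \(\sum c/r_k\) to \(c\sum_t 1/(t-1)\le c\log(eT)\), at the cost of handling phases with \(r_{k+1}>2r_k\) by the \(D\) bound instead. This case distinction is where I expect the real work: a possibly larger universal constant is harmless for \Cref{thm:coco-main}'s \(O(\log T)\) conclusion, but matching \((D+2G_f\rho_d/\mu)\log(eT)\) exactly requires care in the short-phase case.
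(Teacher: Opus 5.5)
\textbf{Gap: the final scalar inequality.} Your Step~1 and the telescoping of $r_{k+1}/r_k$ are exactly the paper's route. However, the proposal never proves the stated constant, and the ``main obstacle'' you describe does not exist. The scalar inequality you believe fails for small $x$ holds for every $x\ge0$:
\[
\min\{D,cx\}\le (D+c)\frac{x}{1+x}\le (D+c)\log(1+x).
\]
If $cx\le D$, the first inequality is equivalent to $c(1+x)\le D+c$, i.e.\ $cx\le D$. If $cx>D$, it is equivalent to $(D+c)x\ge D(1+x)$, i.e.\ $cx\ge D$. The second inequality is the standard bound $\log(1+x)\ge x/(1+x)$. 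Your second-order heuristic overlooked that the coefficient $D+c$ exceeds $c$ by $D$, which supplies first-order slack $Dx$.

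Hence $\delta_k\le (D+c)\log(r_{k+1}/r_k)$ for every completed phase, including $k=0$. No separate treatment of the first phase is needed, because $r_0=1$ gives $\prod_{k=0}^{m-1}r_{k+1}/r_k=r_m\le T$. Summing yields $(D+c)\log T\le(D+c)\log(eT)$.

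The paper argues the same thing from the other side. It writes the reset inequality as $\log(1+\delta_k/c)<\log(r_{k+1}/r_k)$. Because $\delta_k\le D$, it then applies $\log(1+y)\ge y/(1+D/c)$ on the bounded range $y\in[0,D/c]$. The missing idea is therefore to feed the bound $\delta_k\le D$ into the logarithmic estimate, rather than using the chord bound on $[0,1]$, which is what costs you the factor $1/\log 2$. Your fallbacks via $B_{r_k}$ or per-round charging are unnecessary.

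\textbf{Errors in the case analysis.} For $1\le x<e-1$, the claim $\min\{D,cx\}\le\frac12(D+c)$ is false. With $D=\frac32c$ and $x=\frac85$, the left side is $\frac32c$ and the right side is $\frac54c$.

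The summarized bound $D+\max\{D,c/\log2\}\log T$ also does not follow from the per-phase bounds $\delta_k\le\min\{D,cx_k\}$. Indeed, $\sup_{x>0}\min\{D,cx\}/\log(1+x)=D/\log(1+D/c)$, and this exceeds $\max\{D,c/\log2\}$ whenever $1<D/c<e-1$. So, as written, even your weaker constant is not established.

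The displayed inequality above repairs both problems at once. Once repaired, your formulation never divides by $G_f$, so it also avoids the paper's separate $G_f=0$ case.
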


\begin{proof}
If \(G_f=0\), every \(f_t\) is constant on \(X\).  Positive strong convexity then forces \(X\) to be a singleton: otherwise, applying strong convexity at the midpoint of two distinct points would contradict constancy.  Hence every \(\delta_k=0\), and the claim is immediate.  We may therefore assume \(G_f>0\).

If \(\delta_k=0\), it contributes nothing.  Otherwise, drop the nonnegative term \(B_{r_k}\) from \Cref{lem:reset-ineq} and divide by \(G_f\rho_d\delta_k r_k\).  This gives
\[
        \frac{r_{k+1}-r_k}{r_k}
        >
        \frac{\mu\delta_k}{2G_f\rho_d},
\]
or
\[
        \frac{r_{k+1}}{r_k}
        >
        1+\frac{\mu\delta_k}{2G_f\rho_d}.
\]
Multiplying over completed phases and using \(r_0=1\), \(r_m\le T\), yields
\[
        \prod_{k=0}^{m-1}
        \left(1+\frac{\mu\delta_k}{2G_f\rho_d}\right)
        \le T.
\]
Taking logarithms,
\[
        \sum_{k=0}^{m-1}
        \log\left(1+\frac{\mu\delta_k}{2G_f\rho_d}\right)
        \le \log T.
\]
Since \(a_k,a_{k+1}\in X\), each \(\delta_k\le D\).  For \(0\le x\le M\), \(\log(1+x)\ge x/(1+M)\).  Apply this with \(x=\mu\delta_k/(2G_f\rho_d)\) and \(M=\mu D/(2G_f\rho_d)\).  We obtain
\[
        \frac{\mu}{2G_f\rho_d+\mu D}\sum_{k=0}^{m-1}\delta_k
        \le \log(eT),
\]
which is equivalent to the claimed bound.
\end{proof}

\begin{proof}[Proof of \Cref{thm:coco-main}]
We first bound \(P_T^z\).  The initial movement is at most \(D\).  For a completed phase \(k\), apply \Cref{lem:endpoint} to the chaser requests in the phase and append the next reset point \(a_{k+1}\) as a singleton request.  As explained in the proof of \Cref{lem:reset-ineq}, the actual movement during the phase, including the reset jump, is bounded by \(\rho_d\delta_k\).  The unfinished final phase has movement at most \(\rho_dD\), by appending any point in the final feasible set, which lies at distance at most \(D\) from the phase start.  Therefore
\[
        P_T^z
        \le
        D+\rho_d\sum_{k=0}^{m-1}\delta_k+\rho_dD.
\]
Using \Cref{lem:delta-sum},
\[
        P_T^z
        \le
        (1+\rho_d)D+
        \rho_d\left(D+\frac{2G_f\rho_d}{\mu}\right)\log(eT).
\]
Finally, \Cref{lem:movement-reduction} gives
\[
        \Reg_T\le G_fP_T^z-B_T\le G_fP_T^z,
        \qquad
        \CCV_T\le G_gP_T^z.
\]
This proves the theorem.
\end{proof}

\end{document}